\newcommand{\acks}[1]{\section*{Acknowledgments}#1}
\newtheorem{theorem}{Theorem}
\newtheorem{definition}[theorem]{Definition}
\newtheorem{property}[theorem]{Property}
\newtheorem{proposition}[theorem]{Proposition}
\newtheorem{assumption}[theorem]{Assumption}
\newtheorem{lemma}[theorem]{Lemma}
\title{Average Controlled and Average Natural Micro Direct Effects in Summary Causal Graphs}
\author[1]{Simon Ferreira}
\author[1]{Charles~K.~Assaad}
\affil[1]{Sorbonne Université, INSERM, Institut Pierre Louis d’Epidémiologie et de Santé Publique, F75012, Paris, France}  
\date{}
\begin{document}

\maketitle

\begin{abstract}%
In this paper, we investigate the identifiability of average controlled direct effects and average natural direct effects in causal systems represented by summary causal graphs, which are abstractions of full causal graphs, often used in dynamic systems where cycles and omitted temporal information complicate causal inference.
Unlike in the traditional linear setting, where direct effects are typically easier to identify and estimate, non-parametric direct effects, which are crucial for handling real-world complexities, particularly in epidemiological contexts where relationships between variables (\eg, genetic, environmental, and behavioral factors) are often non-linear, are much harder to define and identify.
In particular, we give sufficient conditions for identifying average controlled micro direct effect and average natural micro direct effect from summary causal graphs in the presence of hidden confounding.
Furthermore, we show that the conditions given for the average controlled micro direct effect become also necessary in the setting where there is no hidden confounding and where we are only interested in identifiability by adjustment.
\end{abstract}

\section{Introduction}

The identification and estimation of direct effects are critical in many applications. For instance, epidemiologists aim to measure how smoking affects lung cancer risk independently of genetic susceptibility~\citep{Zhou_2021}. In addition, it has been shown that estimating direct effects can help diagnose system failures by comparing the direct impacts of different components before and after failure~\citep{Assaad_2023}.

In the framework of Structural Causal Models (SCMs) \citep{Pearl_2000}, there has been significant progress in identifying direct effects using fully specified causal graphs. 
However,
constructing a fully specified causal graph is a challenging task, as it requires precise knowledge of the causal relationships between all pairs of observed variables. In complex, high-dimensional settings such as in medico-administrative databases, this level of detail is often unavailable, limiting the practical application of causal inference. As a result, there has been growing interest in the use of partially specified causal graphs in recent years \citep{Perkovic_2020, Anand_2023, Ferreira_2024, Assaad_2024, Wahl_2024}. One notable example is the cluster-ADMG (Acyclic Directed Mixed Graph) introduced by \cite{Anand_2023}, which offers a coarser representation of causal relationships. 
In this graph, vertices represent clusters of variables and therefore it simplifies the representation of complex systems. However, one main limitation of this type of graph is its acyclicity assumption which does not necessarily hold even if the underlying fully specified graph is acyclic. 
A closely related graph type is the summary causal graph (SCG)~\citep{Peters_2013}, which is mainly used in systems involving time and which relaxes the assumption of acyclicity and represents causal relationships where each cluster corresponds to a time series or repeated measurements of the same variable in longitudinal studies. This flexibility makes SCGs particularly useful in scenarios involving temporal data and cycles.

In SCGs, two types of causal effects may be of interest. The first is the macro causal effect, which captures the impact of one set of time series on another set of time series. The second is the micro causal effect, which concerns the effect of one specific time point on another. This paper focuses on micro causal effects, with particular emphasis on micro direct effects.
To date, the problem of identifying micro direct effects from SCGs has primarily been addressed under the assumptions of no hidden confounding and linear SCMs~\citep{Ferreira_2024}. However, many real-world applications involve hidden confounding and relationships between causes and effects that are not necessarily linear. In a non-parametric setting, defining direct effects becomes more complex. The literature of causal inference primarily offers two definitions: the controlled direct effect and the natural direct effect~\citep{Robins_1992,Pearl_2001}.
The former isolates the impact of the treatment variable by controlling on mediators or on the parents of the response variable, while the natural direct effect captures how the treatment variable influences the response variable while fixing mediators to their natural values, making it more realistic but harder to identify and estimate.

This paper focuses on the identifiability of direct effects from SCGs in a non-parametric setting.
Our contributions are threefold: first, we present sufficient conditions that characterizes cases where the average controlled micro direct effect is graphically identifiable from an SCG in the presence of hidden confounding. Next we show that those conditions become necessary in the absence of hidden confounding and when considering only identifiability by adjustment. Finally, we establish sufficient conditions for the identifiability of the average natural micro direct effect from an SCG with hidden confounding.

The remainder of the paper is organized as follows: Section~\ref{sec:rw} reviews related work, while Section~\ref{sec:preliminaries} introduces preliminaries and tools necessary for the subsequent sections. Section~\ref{sec:controlled} provides identifiability conditions for average controlled micro direct effects, and Section~\ref{sec:natural} addresses identifiability conditions for average natural micro direct effects. Finally, Section~\ref{sec:conclusion} concludes the paper.

\section{Related works}
\label{sec:rw}

There is a substantial body of work in the literature focused on identifying causal effects using partially specified graphs. For instance, the studies by \cite{Maathuis_2013, Perkovic_2016, Perkovic_2020, Wang_2023} provided conditions for identifying total effects through partially directed graphs that represent all graphs within the Markov equivalence class of the true causal graph, including CPDAGs, MPDAGs, and PAGs. 
Furthermore, \cite{Flanagan_2020} provided conditions for identifying average controlled direct effects and average natural direct effects using CPDAGs and MPDAGs assuming no hidden confounding.

In the context of Cluster-ADMGs (acyclic directed mixed graphs where each vertex represents a group of variables), \cite{Anand_2023} extended Pearl’s do-calculus \citep{Pearl_1995} to establish necessary and sufficient conditions for identifying total effects between clusters. Building on this, \cite{Ferreira_2024_workshop} extended these results to SCGs, where each cluster represents a time series and which unlike Cluster-ADMGs, can contain cycles. However, both studies focus on macro causal effects (\ie, effects between clusters), whereas the present paper is concerned with micro-level causal effects within clusters. 

For micro causal effects, \cite{Eichler_2007} provided sufficient conditions for identifying micro total effects in time series, applicable to SCGs under the assumption of no instantaneous relations. When instantaneous relations are allowed, \cite{Assaad_2023} showed that both micro total and micro direct effects are identifiable from SCGs in a linear setting, assuming no hidden confounders and that cycles in the SCG cannot be of size greater than one. Subsequently, \cite{Ferreira_2024} and \cite{Assaad_2024} extended these results, establishing conditions for identifying micro total effects in a non-parametric setting and micro direct effects in a linear setting, even when cycles larger than one exist in the SCG. More recently, \cite{Assaad_2024_frontdoor} provided sufficient conditions for identifying total effects in the presence of hidden confounding. However, identifying micro direct effects in a non-parametric setting or with hidden confounding from SCGs (or any graph with cycles) remains an open challenge. This is not surprising, as direct effects are inherently more complex than total effects in non-parametric setting.

\section{Preliminaries}
\label{sec:preliminaries}

In this section, we introduce the tools, terminology, and key theoretical results that will be essential for the remainder of the paper.

In the remainder, uppercase letters are used to denote variables, lowercase letters represent specific values of those variables, and blackboard bold letters indicate sets of variables.
For any given graph,  $\parents{\cdot}{\cdot}$, $\ancestors{\cdot}{\cdot}$, and $\descendants{\cdot}{\cdot}$ will be used to represent the parents, ancestors, and descendants of a vertex, respectively. The first argument refers to the specific vertex under consideration, while the second argument corresponds to the graph in which these relationships are being examined. For instance, $\parents{X}{\mathcal{G}}$ refers to the parents of vertex $X$ in graph $\mathcal{G}$. By convention, we will treat any vertex as an ancestor and a descendant of itself.
The mutilated graph $\mathcal{G}_{\overline{W}\underline{Z}}$ is obtained from $\mathcal{G}$ by removing all edges with an arrowhead pointing to $W$ (\eg, $X \rightarrow W$, $X \longdashleftrightarrow W$), as well as all edges with a tail originating from $Z$ (\eg, $X \leftarrow Z$).
In a graph, a path is said to be activated by $\mathbb{W}$ if every collider $W$ (\ie, a vertex with structure $\rightarrow W \leftarrow$) on the path, or any of its descendants, is in $\mathbb{W}$, and no non-collider on the path is in $\mathbb{W}$. A path that is not activated is said to be blocked. If all paths between $X$ and $Y$ are blocked by a set $\mathbb{W}$, then $X$ is said to be d-separated from $Y$ by $\mathbb{W}$ in the graph $\mathcal{G}$, denoted as $\dsepc{X}{Y}{\mathbb{W}}{\mathcal{G}}$.
The strongly connected component of a vertex $X$ in a graph $\mathcal{G}$, denoted by $\scc{X}{\mathcal{G}}$, is the maximal subset of vertices that includes $X$, such that every vertex in this subset has a directed path to every other vertex in the subset. By convention, $\scc{X}{\mathcal{G}} = \emptyset$ if there is no cycle in the graph that includes $X$ and $\scc{X}{\mathcal{G}} = \{X\}$ if the only cycle that includes $X$ is a self loop on $X$.
We refer to $\probasymbol$ as the probability distribution, and $\probac{\cdot}{\interv{X = x}}$ as the interventional distribution, where the $\interv{\cdot}$ operator represents the intervention. 

In this study, we consider an unknown discrete-time dynamic structural causal model, a specific variant of the structural causal model~\citep{Pearl_2000}, designed to account for temporal dynamics.

\begin{definition}[Discrete-time dynamic structural causal model (DTDSCM)]
	\label{def:DTDSCM}
	A discrete-time dynamic structural causal model is a tuple $\mathcal{M}=(\mathbb{L}, \mathbb{V}, \mathbb{F},  \proba{\mathbb{l}})$,
	where $\mathbb{L} = \bigcup \{\mathbb{L}^{v^i_t} \mid i \in [1,d], t \in [t_0,t_{max}]\}$ is a set of exogenous variables, which cannot be observed but affect the rest of the model.
	$\mathbb{V} = \bigcup \{\mathbb{V}^i \mid i \in [1,d]\}$ such that $\forall i \in [1,d]$, $\mathbb{V}^{i} = \{V^{i}_{t} \mid t \in [t_0,t_{max}]\}$, is a set of endogenous variables, which are observed and every $V^i_t \in \mathbb{V}$ is functionally dependent on some subset of $\mathbb{L}^{v^i_t} \cup \mathbb{V}_{\leq t} \backslash \{V^i_t\}$ where $\mathbb{V}_{\leq t} = \{V^j_{t'} \mid j \in [1,d], t'\leq t\}$.
	$\mathbb{F}$ is a set of functions such that for all $V^i_t \in \mathbb{V}$, $f^{v^i_t}$ is a mapping from $\mathbb{L}^{v^i_t}$ and a subset of $\mathbb{V}_{\leq t} \backslash \{V^i_t\}$ to $V^i_t$.
	$\proba{\mathbb{l}}$ is a joint probability distribution over $\mathbb{L}$.
\end{definition}

Some of the findings in this paper consider an unknown, general DTDSCM, while others focus on a specific subfamily of DTDSCM that are constrained by the following assumption.
\begin{assumption}
    \label{ass:CausalSufficiency}

    $\forall L,L'\in \mathbb{L},~\proba{L,L'}=\proba{L}\proba{L'}$ and $\forall V^i_t \neq V^j_{t'},~\mathbb{L}^{v^i_t} \cap \mathbb{L}^{v^j_{t'}} = \emptyset$, \ie, there is no hidden confounding.
\end{assumption}
Furthermore, we suppose that in every unknown DTDSCM, there exists a known maximal lag between causes and effects, denoted as $\gamma_{max}$.
Additionally, we assume that the probability distribution $\probasymbol$ over the observed variables is positive.
Finally, we suppose that each DTDSCM induces a full-time acyclic directed mixed graph (FT-ADMG), a specific type of ADMG~\citep{Richardson_2003}, where bidirected dashed arrows represent hidden confounding. Under Assumption~\ref{ass:CausalSufficiency}, the FT-ADMG reduces to a full-time directed acyclic graph (FT-DAG).

\begin{definition}[Full-Time Acyclic Directed Mixed Graph]
	\label{def:FT-ADMG}
	Consider a DTDSCM $\mathcal{M}$. The \emph{full-time acyclic directed mixed graph (FT-ADMG)} $\mathcal{G} = (\mathbb{V}, \mathbb{E})$ induced by $\mathcal{M}$ is defined in the following way:
	\begin{equation*}
		\begin{aligned}
			\mathbb{E}^{1} :=& \{X_{t-\gamma}\rightarrow Y_{t} \mid \forall Y_{t} \in \mathbb{V},~X_{t-\gamma} \in \mathbb{X} \subseteq \mathbb{V}_{\leq t} \backslash \{Y_t\} \\
            &\st Y_t := f^{y_t}(\mathbb{X}, \mathbb{L}^{y_t}) \text{ in } \mathcal{M}\},\\
            \mathbb{E}^{2} :=& \{X_{t-\gamma}\longdashleftrightarrow Y_{t} \mid \forall Y_{t}, \forall X_{t-\gamma}  \in \mathbb{V}\backslash\{X_{t-\gamma}\}\\
            &\st 
            \proba{\mathbb{L}^{x_{t-\gamma}}\cap \mathbb{L}^{y_t}} \ne \proba{\mathbb{L}^{x_{t-\gamma}}}\proba{\mathbb{L}^{y_t}}\},
		\end{aligned}
	\end{equation*}
	where $\mathbb{E}=\mathbb{E}^{1}\cup \mathbb{E}^{2}$.
    
\end{definition}
The findings of this study, when viewed in isolation, do not rely on any additional assumptions. However, following the identification of the direct effect, the estimation phase involves using actual data. If the data consist of time series (\eg, various time series describing a specific patient), a stationarity assumption becomes necessary to satisfy the positivity assumption. Conversely, the stationarity assumption is not required when working with spatio-temporal or cohort data.

In the context of DTDSCM and FT-ADMG, both macro and micro causal effect questions can be posed \citep{Ferreira_2024_workshop}. A macro causal effect refers to the influence of an entire time series (or multiple time series) on another series, whereas a micro causal effect focuses on the influence of specific time points (or multiple time points) on other time points. This work specifically focuses on micro causal effects, with a particular emphasis on micro direct effects. In a linear setting (where all functions in $\mathbb{F}$ are linear), the micro direct effect of $X_{t-\gamma}$ on $Y_t$ is represented by the path coefficient~\citep{Wright_1920,Wright_1921} of $X_{t-\gamma}$ in $f^{y_t}$ within the DTDSCM. However, in a non-parametric setting, defining the direct effect becomes more complex. The literature distinguishes two types of non-parametric direct effects, the first being the controlled direct effect, which is defined as follows.

\begin{definition}[Average Controlled Micro Direct Effect \citep{Pearl_2001}]
    \label{def:CDE}
    Given a FT-ADMG $\mathcal{G} = (\mathbb{V}, \mathbb{E})$, $Y_t, X_{t-\gamma} \in \mathbb{V}$ and $\mathbb{Z} = \parents{Y_t}{\mathcal{G}} \backslash \{X_{t-\gamma}\}$.
    The controlled micro direct effect of changing $X_{t-\gamma}$ from $x$ to $x'$ on $Y_t$ while keeping $\mathbb{Z}$ at value $\mathbb{z}$ is defined as $$CDE(X_{t-\gamma}^{x,x'},Y_t, \mathbb{z}) = E(Y_t\mid \interv{X_{t-\gamma} = x'}, \interv{\mathbb{Z} = \mathbb{z}}) - E(Y_t\mid \interv{X_{t-\gamma} = x}, \interv{\mathbb{Z} = \mathbb{z}}).$$
\end{definition}

In general, average controlled micro direct effects cannot be used for effect decomposition and therefore are not useful in mediation analysis. Specifically, the difference between the total effect and the average controlled micro direct effect cannot typically be interpreted as an indirect effect \citep{Robins_1992,Pearl_2001,Kaufman_2004,Vanderweele_2011}.
This issue can be solved when we set all values of parents of $Y_t$ to whatever value they would have obtained under $do(X_{t-\gamma} = x)$.
This is known as the natural direct effect~\citep{Pearl_2001} or the pure direct effect~\citep{Robins_1992}).

\begin{definition}[Average Natural Micro Direct Effect \citep{Pearl_2001}]
    \label{def:NDE}
    Given a FT-ADMG $\mathcal{G} = (\mathbb{V}, \mathbb{E})$, $Y_t, X_{t-\gamma} \in \mathbb{V}$ and $\mathbb{Z} = \parents{Y_t}{\mathcal{G}} \backslash \{X_{t-\gamma}\}$.
    The natural micro direct effect of changing $X_{t-\gamma}$ from $x$ to $x'$ on $Y_t$ is defined as 
    $$NDE(X_{t-\gamma}^{x,x'},Y_t) = E(Y_t\mid \interv{X_{t-\gamma} = x'}, \interv{\mathbb{Z} = \mathbb{z}_x}) - E(Y_t\mid \interv{X_{t-\gamma} = x})$$
    where $\mathbb{z}_x$ is the counterfactual value of $\mathbb{Z}$ under the setting $X_{t-\gamma} = x$.
\end{definition}
Note that in \cite{Pearl_2014}, $\mathbb{Z}$ is not the parents of $Y_t$ but the set of intermediate variables (\ie, mediators) between $X_{t-\gamma}$ and $Y_t$. In this work we will focus on the set of parents of the effect, as in \cite{Pearl_2001}, since it allows a better parallel with the definition of the average controlled micro direct effect that we adopted.

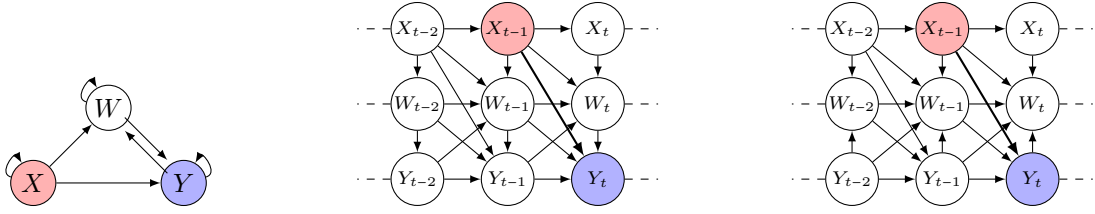
\begin{figure*}[t!]
	\centering
	\begin{subfigure}{.3\textwidth}
		\centering
		\begin{tikzpicture}[{black, circle, draw, inner sep=0}]
			\tikzset{nodes={draw,rounded corners},minimum height=0.6cm,minimum width=0.6cm}	
			\tikzset{latent/.append style={white, fill=black}}
			
			\node[fill=red!30] (X) at (0.2,0) {$X$} ;
			\node[fill=blue!30] (Y) at (2.2,0) {$Y$};
			\node (W) at (1.2,1) {$W$};

			\begin{scope}[transform canvas={yshift=-.25em}]
				\draw [->,>=latex] (Y) -- (W);
			\end{scope}
			\begin{scope}[transform canvas={yshift=.25em}]
				\draw [<-,>=latex] (Y) -- (W);
			\end{scope}			

            \draw [->,>=latex] (X) -- (W);
            \draw [->,>=latex] (X) -- (Y);

			
			\draw[->,>=latex] (X) to [out=165,in=120, looseness=2] (X);
			\draw[->,>=latex] (Y) to [out=15,in=60, looseness=2] (Y);
                \draw[->,>=latex] (W) to [out=165,in=120, looseness=2] (W);

		\end{tikzpicture}
 \end{subfigure}
\hfill 
	\begin{subfigure}{.3\textwidth}
		\begin{tikzpicture}[{black, circle, draw, inner sep=0}]
			\tikzset{nodes={draw,rounded corners},minimum height=0.7cm,minimum width=0.6cm, font=\scriptsize}
			\tikzset{latent/.append style={white, fill=black}}
			
			\node  (Z) at (1.2,1) {$W_t$};
			\node (Z-1) at (0,1) {$W_{t-1}$};
			\node  (Z-2) at (-1.2,1) {$W_{t-2}$};
			\node[fill=blue!30] (Y) at (1.2,0) {$Y_t$};
			\node (Y-1) at (0,0) {$Y_{t-1}$};
			\node  (Y-2) at (-1.2,0) {$Y_{t-2}$};
			\node (X) at (1.2,2) {$X_t$};
			\node[fill=red!30] (X-1) at (0,2) {$X_{t-1}$};
			\node  (X-2) at (-1.2,2) {$X_{t-2}$};
			\draw[->,>=latex] (X-2) to (Z-2);
			\draw[->,>=latex] (X-1) to (Z-1);
			\draw[->,>=latex] (X) to (Z);
			\draw[->,>=latex] (Z-2) -- (Y-2);
			\draw[->,>=latex] (Z-1) -- (Y-1);
			\draw[->,>=latex] (Z) -- (Y);
			
			\draw[->,>=latex] (X-2) to (Y-1);
			\draw[->,>=latex, thick] (X-1) to (Y);
			\draw[->,>=latex] (X-2) to (Z-1);
			\draw[->,>=latex] (X-1) to (Z);
			\draw[->,>=latex] (Z-2) -- (Y-1);
			\draw[->,>=latex] (Z-1) -- (Y);
			\draw[->,>=latex] (Y-2) to (Z-1);
			\draw[->,>=latex] (Y-1) to (Z);

			\draw[->,>=latex] (X-2) -- (X-1);
			\draw[->,>=latex] (X-1) -- (X);
			\draw[->,>=latex] (Y-2) -- (Y-1);
			\draw[->,>=latex] (Y-1) -- (Y);
			\draw[->,>=latex] (Z-2) -- (Z-1);
			\draw[->,>=latex] (Z-1) -- (Z);
						
			\draw [dashed,>=latex] (X-2) to[left] (-2,2);
			\draw [dashed,>=latex] (Z-2) to[left] (-2,1);
			\draw [dashed,>=latex] (Y-2) to[left] (-2,0);
			\draw [dashed,>=latex] (X) to[right] (2,2);
			\draw [dashed,>=latex] (Z) to[right] (2,1);
			\draw [dashed,>=latex] (Y) to[right] (2,0);

		\end{tikzpicture}		
  	\end{subfigure}
	\hfill 
	\begin{subfigure}{.3\textwidth}
  \begin{tikzpicture}[{black, circle, draw, inner sep=0}]
			\tikzset{nodes={draw,rounded corners},minimum height=0.7cm,minimum width=0.6cm, font=\scriptsize}
			\tikzset{latent/.append style={white, fill=black}}
			
			\node  (Z) at (1.2,1) {$W_t$};
			\node (Z-1) at (0,1) {$W_{t-1}$};
			\node  (Z-2) at (-1.2,1) {$W_{t-2}$};
			\node[fill=blue!30] (Y) at (1.2,0) {$Y_t$};
			\node (Y-1) at (0,0) {$Y_{t-1}$};
			\node  (Y-2) at (-1.2,0) {$Y_{t-2}$};
			\node (X) at (1.2,2) {$X_t$};
			\node[fill=red!30] (X-1) at (0,2) {$X_{t-1}$};
			\node  (X-2) at (-1.2,2) {$X_{t-2}$};
			\draw[->,>=latex] (X-2) to (Z-2);
			\draw[->,>=latex] (X-1) to (Z-1);
			\draw[->,>=latex] (X) to (Z);
			\draw[->,>=latex] (Y-2) -- (Z-2);
			\draw[->,>=latex] (Y-1) -- (Z-1);
			\draw[->,>=latex] (Y) -- (Z);

   			\draw[->,>=latex] (X-2) to (Y-1);
			\draw[->,>=latex, thick] (X-1) to (Y);
			\draw[->,>=latex] (X-2) to (Z-1);
			\draw[->,>=latex] (X-1) to (Z);
			\draw[->,>=latex] (Z-2) -- (Y-1);
			\draw[->,>=latex] (Z-1) -- (Y);
			\draw[->,>=latex] (Y-2) to (Z-1);
			\draw[->,>=latex] (Y-1) to (Z);

			\draw[->,>=latex] (X-2) -- (X-1);
			\draw[->,>=latex] (X-1) -- (X);
			\draw[->,>=latex] (Y-2) -- (Y-1);
			\draw[->,>=latex] (Y-1) -- (Y);
			\draw[->,>=latex] (Z-2) -- (Z-1);
			\draw[->,>=latex] (Z-1) -- (Z);			
			
			\draw [dashed,>=latex] (X-2) to[left] (-2,2);
			\draw [dashed,>=latex] (Z-2) to[left] (-2,1);
			\draw [dashed,>=latex] (Y-2) to[left] (-2,0);
			\draw [dashed,>=latex] (X) to[right] (2,2);
			\draw [dashed,>=latex] (Z) to[right] (2,1);
			\draw [dashed,>=latex] (Y) to[right] (2,0);
		\end{tikzpicture}		
 \end{subfigure}
	\caption{An SCG in (a) with two compatible FT-ADMGs in (b) and (c). Each pair of red and blue vertices represents the micro direct effect we are interested in. Both the controlled direct effect and the natural direct effect are not identifiable using the conditions given in this paper.}
	\label{fig:example1}
\end{figure*}
\begin{figure*}[t!]
	\centering
	\begin{subfigure}{.3\textwidth}
		\centering
		\begin{tikzpicture}[{black, circle, draw, inner sep=0}]
			\tikzset{nodes={draw,rounded corners},minimum height=0.6cm,minimum width=0.6cm}	
			\tikzset{latent/.append style={white, fill=black}}
			
			\node[fill=red!30] (X) at (0.2,0) {$X$} ;
			\node[fill=blue!30] (Y) at (2.2,0) {$Y$};
			\node (W) at (1.2,1) {$W$};

				\draw [<-,>=latex] (Y) -- (W);

            \draw [->,>=latex] (X) -- (W);
            \draw [->,>=latex] (X) -- (Y);

		\draw[<->,>=latex, dashed] (W) to [out=0,in=90, looseness=1] (Y);
			
			\draw[->,>=latex] (X) to [out=165,in=120, looseness=2] (X);
			\draw[->,>=latex] (Y) to [out=15,in=60, looseness=2] (Y);
                \draw[->,>=latex] (W) to [out=165,in=120, looseness=2] (W);

			\draw[<->,>=latex, dashed] (Y) to [out=-25,in=-70, looseness=2] (Y);
   			\draw[<->,>=latex, dashed] (W) to [out=20,in=65, looseness=2] (W);			
		\end{tikzpicture}
 \end{subfigure}
\hfill 
	\begin{subfigure}{.3\textwidth}
		\begin{tikzpicture}[{black, circle, draw, inner sep=0}]
			\tikzset{nodes={draw,rounded corners},minimum height=0.7cm,minimum width=0.6cm, font=\scriptsize}
			\tikzset{latent/.append style={white, fill=black}}
			
			\node  (Z) at (1.2,1) {$W_t$};
			\node (Z-1) at (0,1) {$W_{t-1}$};
			\node  (Z-2) at (-1.2,1) {$W_{t-2}$};
			\node[fill=blue!30] (Y) at (1.2,0) {$Y_t$};
			\node (Y-1) at (0,0) {$Y_{t-1}$};
			\node  (Y-2) at (-1.2,0) {$Y_{t-2}$};
			\node (X) at (1.2,2) {$X_t$};
			\node[fill=red!30] (X-1) at (0,2) {$X_{t-1}$};
			\node  (X-2) at (-1.2,2) {$X_{t-2}$};
			\draw[->,>=latex] (X-2) to (Z-2);
			\draw[->,>=latex] (X-1) to (Z-1);
			\draw[->,>=latex] (X) to (Z);
			\draw[->,>=latex] (Z-2) -- (Y-2);
			\draw[->,>=latex] (Z-1) -- (Y-1);
			\draw[->,>=latex] (Z) -- (Y);

                \draw[->,>=latex] (X-2) to (Y-1);
			\draw[->,>=latex, thick] (X-1) to (Y);
			\draw[->,>=latex] (X-2) to (Z-1);
			\draw[->,>=latex] (X-1) to (Z);
			\draw[->,>=latex] (Z-2) -- (Y-1);
			\draw[->,>=latex] (Z-1) -- (Y);

			\draw[->,>=latex] (X-2) -- (X-1);
			\draw[->,>=latex] (X-1) -- (X);
			\draw[->,>=latex] (Y-2) -- (Y-1);
			\draw[->,>=latex] (Y-1) -- (Y);
			\draw[->,>=latex] (Z-2) -- (Z-1);
			\draw[->,>=latex] (Z-1) -- (Z);
			
			\draw[<->,>=latex, dashed] (Z-2) to [out=190,in=170, looseness=1] (Y-2);
			\draw[<->,>=latex, dashed] (Z-1) to [out=-10,in=10, looseness=1] (Y-1);
			\draw[<->,>=latex, dashed] (Z) to [out=-10,in=10, looseness=1] (Y);
			\draw[<->,>=latex, dashed] (Z-2) to [out=-70,in=160, looseness=1] (Y-1);
			\draw[<->,>=latex, dashed] (Z-1) to [out=-70,in=160, looseness=1] (Y);
			\draw[<->,>=latex, dashed] (Z-2) to [out=80,in=100, looseness=1] (Z-1);
			\draw[<->,>=latex, dashed] (Z-1) to [out=80,in=100, looseness=1] (Z);
			\draw[<->,>=latex, dashed] (Y-2) to [out=-80,in=-100, looseness=1] (Y-1);
			\draw[<->,>=latex, dashed] (Y-1) to [out=-80,in=-100, looseness=1] (Y);

			\draw [dashed,>=latex] (X-2) to[left] (-2,2);
			\draw [dashed,>=latex] (Z-2) to[left] (-2,1);
			\draw [dashed,>=latex] (Y-2) to[left] (-2,0);
			\draw [dashed,>=latex] (X) to[right] (2,2);
			\draw [dashed,>=latex] (Z) to[right] (2,1);
			\draw [dashed,>=latex] (Y) to[right] (2,0);
		\end{tikzpicture}		
  	\end{subfigure}
	\caption{An SCG in (a) with a compatible FT-ADMG in (b). Each pair of red and blue vertices represents the micro direct effect we are interested in. Both the controlled direct effect and the natural direct effect are not identifiable using the conditions given in this paper.}
	\label{fig:example2}
\end{figure*}
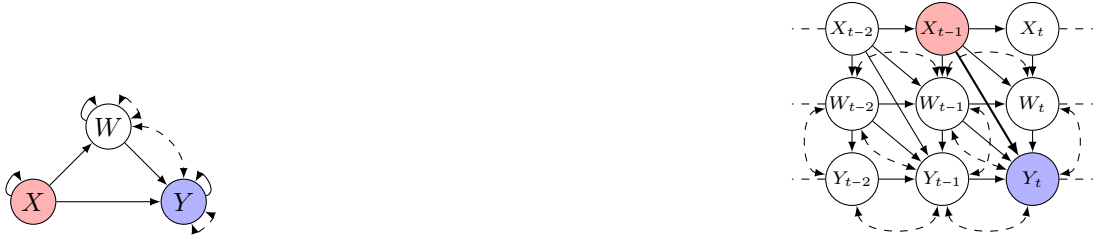

There has been substantial work on identifying both average controlled and average natural direct effects from observational data using ADMGs, and by extension, FT-ADMGs. 
These direct effects are considered identifiable if they can be expressed solely in terms of conditional probabilities and expectations over observed variables. Notably, the task of identifying these effects can, in some cases, be fully reduced to identifying interventional distributions alone. For instance, in the case of the average controlled direct effect, identification relies entirely on identifying the interventional distribution $\probac{y_t}{\interv{X_{t-\gamma}=x},\interv{\mathbb{Z}=\mathbb{z}}}$. Meanwhile, for the average natural direct effect, it partially depends on identifying the interventional distribution $\probac{y_t}{\interv{X_{t-\gamma}=x}}$ and additionally requires identifying a counterfactual term. However, as demonstrated by \cite{Pearl_2001} and as we will show in Section~\ref{sec:natural}, even this counterfactual term can sometimes be identified by relating it back to specific interventional distributions.
The do-calculus, introduced by \cite{Pearl_1995} provides a symbolic machinery that can be used to identify interventional distributions.
It compromises three rules, each establishing specific graphical criteria that dictate when substitutions can be applied to an interventional distribution.
The most important aspect of the do-calculus is that it is sound and complete: any interventional distribution is identifiable if and only if there exists a sequence of the rules of the do-calculus that transforms the interventional distribution into an expression containing only conditional probabilities and expectations over observed variables~\citep{Pearl_1995,Shpitser_2006,Huang_2006}.
The three rules of the do-calculus are as follows:
	\begin{equation*}
		\begin{aligned}
			\textbf{R1:}& \probac{\mathbb{Y} = \mathbb{y}}{\interv{\mathbb{Z} = \mathbb{z}},\mathbb{X} = \mathbb{x},\mathbb{W} = \mathbb{w}} = \probac{\mathbb{Y} = \mathbb{y}}{\interv{\mathbb{Z} = \mathbb{z}},\mathbb{W} = \mathbb{w}}
			&\text{if }& \dsepc{\mathbb{Y}}{\mathbb{X}}{\mathbb{Z}, \mathbb{W}}{\mathcal{G}_{\overline{\mathbb{Z}}}}\\
			\textbf{R2:}& \probac{\mathbb{Y} = \mathbb{y}}{\interv{\mathbb{Z} = \mathbb{z}},\interv{\mathbb{X} = \mathbb{x}},\mathbb{W} = \mathbb{w}} = \probac{\mathbb{Y} = \mathbb{y}}{\interv{\mathbb{Z} = \mathbb{z}},\mathbb{X} = \mathbb{x},\mathbb{W} = \mathbb{w}}
			&\text{if }& \dsepc{\mathbb{Y}}{\mathbb{X}}{\mathbb{Z}, \mathbb{W}}{\mathcal{G}_{\overline{\mathbb{Z}}\underline{\mathbb{X}}}}\\
			\textbf{R3:}& \probac{\mathbb{Y} = \mathbb{y}}{\interv{\mathbb{Z} = \mathbb{z}},\interv{\mathbb{X} = \mathbb{x}},\mathbb{W} = \mathbb{w}} = \probac{\mathbb{Y} = \mathbb{y}}{\interv{\mathbb{Z} = \mathbb{z}},\mathbb{W} = \mathbb{w}} 
			&\text{if }& \dsepc{\mathbb{Y}}{\mathbb{X}}{\mathbb{Z}, \mathbb{W}}{\mathcal{G}_{\overline{\mathbb{Z}}\overline{\mathbb{X}(\mathbb{W})}}}\\
		\end{aligned}
	\end{equation*}
	where $\mathbb{X}(\mathbb{W})$ is the set of vertices in $\mathbb{X}$ that are non-ancestors of any vertex in $\mathbb{W}$ in the mutilated graph $\mathcal{G}_{\overline{\mathbb{Z}}}$.

The interventional distribution is said to be identifiable by adjustment when it can be expressed as a sum of probabilities of the response conditioned on the treatment and a subset of observed variables $\mathbb{W}$, weighted by the probability of observing those values of $\mathbb{W}$, \ie, $\sum_{\mathbb{w}} \probac{Y_t = y}{X_{t-\gamma}=x, \mathbb{W}=\mathbb{w}} \proba{\mathbb{W} = \mathbb{w}}$.

However, it is challenging for practitioners in fields like epidemiology to provide or validate a FT-ADMG, as this requires detailed knowledge of temporal dynamics at every time point, which can be both complex and impractical. Moreover, causal discovery algorithms~\citep{Spirtes_2000} (\ie, algorithms which infer a causal graph from observational data under strong untestable assumptions) are still largely ineffective on real-world data \citep{aitbachir_2023}. Therefore, it is often more feasible to use summary causal graphs, which simplify the causal structure into a more manageable form. This allows practitioners to focus on the key pathways and relationships. For instance, when studying the long-term effects of smoking on lung disease, epidemiologists can outline the main causal pathways\textemdash like smoking leading to lung damage\textemdash without needing to detail the exact timing of each intermediate step.

\begin{definition}[Summary Causal Graph with possible latent confounding]
	\label{def:SCG}
	Consider an FT-ADMG $\mathcal{G} = (\mathbb{V}, \mathbb{E})$. The \emph{summary causal graph (SCG)} $\mathcal{G}^{s} = (\mathbb{S}, \mathbb{E}^{{s}})$ compatible with $\mathcal{G}$ is defined in the following way:
	\begin{equation*}
		\begin{aligned}
			\mathbb{S} :=& \{V^i = (V^i_{t_0},\cdots,V^i_{t_{max}}) \mid \forall i \in [1, d]\},\\
           \mathbb{E}^{s1} :=& \{X\rightarrow Y \mid \forall X,Y \in \mathbb{S},~\exists t'\leq t\in [t_0,t_{max}] 
        &\st& X_{t'}\rightarrow Y_{t}\in\mathbb{E}\},\\
\mathbb{E}^{s2} :=& \{X\longdashleftrightarrow Y \mid \forall X,Y \in \mathbb{S},~\exists t',t \in [t_0,t_{max}] 
&\st& X_{t'}\longdashleftrightarrow Y_{t}\in\mathbb{E}\}.        
\end{aligned}\\
\end{equation*}

where $\mathbb{E}^{s} = \mathbb{E}^{s1}\cup \mathbb{E}^{s2}.$
 
\end{definition}

Figures~\ref{fig:example1}-\ref{fig:example4} present many examples of SCGs with compatible FT-ADMGs.
The abstraction of SCGs implies that, while a given FT-ADMG corresponds to exactly one SCG, multiple FT-ADMGs can be compatible with the same SCG. 
As a result, the do-calculus as introduced by \cite{Pearl_1995}, is not directly applicable to SCGs to identify micro causal effects~\citep{Ferreira_2024_workshop}. The objective of this paper is to establish specific graphical conditions within an SCG that guarantee the existence of a sequence of do-calculus rules that can be applied to any FT-ADMG compatible with the SCG, yielding an identical expression involving only conditional probabilities and expectations over observed variables.
A key insight about SCGs that aids in identifying these conditions is that, although there may not be a strict one-to-one correspondence between the set of parents of a vertex $Y$ in an SCG and the set of parents of $Y_t$ in any particular FT-ADMG, a precise mapping is possible when considering the set of all potential parents across all FT-ADMGs compatible with the SCG. Specifically, this set includes vertices that act as parents in at least one FT-ADMG that aligns with the SCG. Formally, this set of possible parents is defined as follows.

\begin{definition}[Possible Parents]
    Given an SCG $\mathcal{G}^s = (\mathbb{S}, \mathbb{E}^s)$, a maximal lag $\gamma_{max}$ and a vertex $Y\in \mathbb{S}$. The set of possible parents of the temporal vertex $Y_t$ is the set of temporal vertices (\ie vertices in compatible FT-ADMGs) which are parents of $Y_t$ in at least one compatible FT-ADMG. It is written $PP(Y_t)$ and is defined as $$PP(Y_t) = \{P_{t-\gamma} \mid P \in \parents{Y}{\mathcal{G}^s} \backslash \{Y\}, \gamma \in [0,\gamma_{max}]\} \cup \{Y_{t-\gamma} \mid \gamma \in (0,\gamma_{max}]\}*\mathbb{1}_{Y \in \parents{Y}{\mathcal{G}^s}}.$$
\end{definition}


We close this section with the following property of possible parents, which shows that it is possible to replace the set of parents in the definitions of the average controlled micro direct effect and the average natural micro direct effect by the set of potential parents. 

\begin{property}
    \label{prop:do_parents}
    Given a FT-ADMG $\mathcal{G} = (\mathbb{V}, \mathbb{E})$, $Y_t \in \mathbb{V}$ and $\mathbb{Z} \supseteq \parents{Y_t}{\mathcal{G}}$ such that $Y_t \notin \mathbb{Z}$, the following equality holds for all $\mathbb{z}$:
    $$\probac{Y_t = y}{\interv{\parents{Y_t}{\mathcal{G}} = \mathbb{z}\mid_{\parents{Y_t}{\mathcal{G}}}}} = \probac{Y_t = y}{\interv{\mathbb{Z} = \mathbb{z}}}.$$
    This equality holds specifically for $\mathbb{Z} = PP(Y_t)$ and note that there exists a compatible FT-ADMG $\mathcal{G}'$ in which $PP(Y_t) = \parents{Y_t}{\mathcal{G}'}$.
\end{property}

This property results directly from R3 of the do-calculus.
All formal proof can be found in the supplementary materials.



\section{Identifying Controlled Micro Direct Effects in SCGs}
\label{sec:controlled}



As stated in the previous section, the average controlled micro direct effect of interest is the one where we intervene on all parents of $Y_t$. 
In order to identify this effect, in particular using R2 of the do-calculus, one can find a set of variables of $Y_t$ which d-separates $Y_t$ from its parents in $\mathcal{G}_{\underline{\parents{Y_t}{G}}}$ for all compatible FT-ADMG $\mathcal{G}$. There are two significant challenges in this process. The first challenge arises when there exists a cycle involving $Y$ and another vertex in the SCG. For example, consider the SCG in Figure~\ref{fig:example1}(a), where Assumption~\ref{ass:CausalSufficiency} holds. In this case, the cycle between $Y$ and $W$ implies the presence of a compatible FT-ADMG $\mathcal{G}$, as depicted in Figure~\ref{fig:example1}(b), in which $W_t\in\parents{Y_t}{\mathcal{G}}$.
In this FT-ADMG we would like to ultimately remove all the $\interv{\cdot}$ operators in 
the interventional distribution $\probac{Y_t = y}{\interv{W_{t} = w}, \interv{Y_{t-1}=y'}, \interv{X_{t-1}=x}, \interv{W_{t-1}=w'}}$ but for the sake of illustration let us only focus on removing the $\interv{\cdot}$ on $W_{t}$.  Notice that in this FT-ADMG, we have $\indepc{Y_t}{W_t}{Y_{t-1}, X_{t-1}, W_{t-1}}{\mathcal{G}_{\overline{Y_{t-1}X_{t-1}W_{t-1}}\underline{W_t}}}$, which means that by using using R2 of the do-calculus, it is possible to write the considered interventional distribution as $\probac{Y_t = y}{W_{t} = w, \interv{Y_{t-1}=y'}, \interv{X_{t-1}=x}, \interv{W_{t-1}=w'}}$.
However, there exists another compatible FT-ADMG $\mathcal{G}'$, shown in Figure~\ref{fig:example1}(c) where $W_t \leftarrow Y_t$ and thus it is impossible to use R2 to remove the $\interv{\cdot}$ operator on $W_t$ in the same interventional distribution since $\forall \mathbb{W},~\notdsepc{Y_t}{W_t}{\mathbb{W}}{\mathcal{G}'_{\overline{Y_{t-1}X_{t-1}W_{t-1}}\underline{W_t}}}$. This indicates that the relationship between $W_t$ and $Y_t$ is ambiguous when only the SCG is considered, preventing the removal of the $\interv{\cdot}$ operator on $W_t$ in the considered interventional distribution.
The second challenge occurs when hidden confounding exists between $Y$ and another vertex. 
For instance, consider the SCG in Figure~\ref{fig:example2}(a), in this scenario, by examining the compatible FT-ADMG $\mathcal{G}$ in Figure~\ref{fig:example2}(b), it becomes clear that as in the previous example, it is necessary to remove the $\interv{\cdot}$ on $W_t$ in the interventional distribution $\probac{Y_t = y}{\interv{W_{t} = w}, \interv{Y_{t-1}=y'}, \interv{X_{t-1}=x}, \interv{W_{t-1}=w'}}$ as $W_t\in\parents{Y_t}{\mathcal{G}}$. However, hidden confounding exists between $W_t$ and $Y_t$, making R2 and R3 of the do-calculus inapplicable since $\forall \mathbb{W},~\notdsepc{Y_t}{W_t}{\mathbb{W}}{\mathcal{G}_{\overline{Y_{t-1}X_{t-1}W_{t-1}}\underline{W_t}}}$ and $\notdsepc{Y_t}{W_t}{\mathbb{W}}{\mathcal{G}_{\overline{Y_{t-1}X_{t-1}W_{t-1}}\overline{W_t(\mathbb{W})}}}$\footnote{The subgraph formed by $Y_t$ and $W_t$ in Figure~\ref{fig:example2}(b) is known as an bow arc graph and when such a structure exists, the interventional distribution $\probac{Y_t = y}{\interv{W_{t} = w}}$ is known to be non-identifiable \citep{Shpitser_2006}}. Nevertheless, as shown in the following theorem, it is possible to identify the controlled micro direct effect in any SCG where no cycle involving $Y$ and other vertices exists and where hidden confounding between $Y$ and its ancestors is absent. 

\begin{theorem}
    \label{thm:Id_CDE_with_hidden_confounders}
    Given an SCG $\mathcal{G}^s = (\mathbb{S}, \mathbb{E}^s)$, $Y\in \mathbb{S}$ and $X_{t-\gamma} \in PP(Y_t)$. The controlled direct effect of changing $X_{t-\gamma}$ from $x$ to $x'$ on $Y_t$ is identifiable if $\scc{Y}{\mathcal{G}^s} \subseteq \{Y\}$ and there does not exist a bidirected dashed arrow between $Y$ and one of its ancestors ($\ie \not\exists Z \in \ancestors{Y}{\mathcal{G}^s},~Z\longdashleftrightarrow Y$) and we have:
    $$CDE(X_{t-\gamma}^{x,x'},Y_t, \mathbb{z}) = E(Y_t\mid X_{t-\gamma} = x', \mathbb{Z} = \mathbb{z}) - E(Y_t\mid X_{t-\gamma} = x, \mathbb{Z} = \mathbb{z})$$
    where $\mathbb{Z} = PP(Y_t)\backslash\{X_{t-\gamma}\}$.
\end{theorem}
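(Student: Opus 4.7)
The plan is to reduce the identifiability of the controlled direct effect, in every FT-ADMG compatible with the SCG, to a single application of rule R2 of the do-calculus, and then to verify the required d-separation graphically. First, I would rewrite the two interventional expectations in $CDE(X_{t-\gamma}^{x,x'},Y_t, \mathbb{z})$ as $E(Y_t\mid \interv{PP(Y_t) = \mathbb{p}})$, where $\mathbb{p}$ takes the value $x$ (resp.\ $x'$) on the $X_{t-\gamma}$-coordinate and $\mathbb{z}$ on the remaining coordinates; this is legitimate because $\{X_{t-\gamma}\}\cup\mathbb{Z} = PP(Y_t) \supseteq \parents{Y_t}{\mathcal{G}}$ in every compatible FT-ADMG $\mathcal{G}$, so Property~\ref{prop:do_parents} applies. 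Second, I would fix an arbitrary compatible FT-ADMG $\mathcal{G}$ and invoke R2 once to obtain $\probac{Y_t}{\interv{PP(Y_t)=\mathbb{p}}} = \probac{Y_t}{PP(Y_t)=\mathbb{p}}$, provided that $\dsepc{Y_t}{PP(Y_t)}{\emptyset}{\mathcal{G}_{\underline{PP(Y_t)}}}$ holds. The core of the proof is therefore to establish this d-separation in every compatible FT-ADMG, using the two hypotheses of the theorem.

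Third, I would take an arbitrary path $\pi$ from $Y_t$ to some $W\in PP(Y_t)$ in $\mathcal{G}_{\underline{PP(Y_t)}}$ and show it is blocked. Since $\parents{Y_t}{\mathcal{G}}\subseteq PP(Y_t)$, every directed edge into $Y_t$ has been cut in the mutilated graph, so the first edge of $\pi$ incident to $Y_t$ is either outgoing or bidirected. In the outgoing case, $\pi$ leaves into the descendants of $Y_t$, but every element of $PP(Y_t)$ is a non-descendant of $Y_t$: a descendant $Y_{t-\gamma'}$ with $\gamma'>0$ is impossible by temporal acyclicity, and a descendant $P_{t-\gamma}$ with $P\in\parents{Y}{\mathcal{G}^s}\setminus\{Y\}$ would place $Y$ and $P$ in the same strongly connected component of $\mathcal{G}^s$, contradicting $\scc{Y}{\mathcal{G}^s}\subseteq\{Y\}$. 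An arrowhead-bookkeeping argument at each internal vertex then shows that the only collider-free continuation of an outgoing start is a directed path $Y_t\to\cdots\to W$ (any bidirected edge or direction reversal would leave two arrowheads at an internal vertex), contradicting non-descendance. In the bidirected case, $\pi$ starts with $Y_t\longdashleftrightarrow W_0$, which induces a bidirected edge between $Y$ and the variable of $W_0$ in $\mathcal{G}^s$, so by the second hypothesis that variable is not an ancestor of $Y$ in $\mathcal{G}^s$. The same bookkeeping shows that the only collider-free continuation from $W_0$ to $W$ is a directed path $W_0\to\cdots\to W$; but then $W_0$'s variable is an ancestor of $W$'s variable in $\mathcal{G}^s$, and $W$'s variable is itself an ancestor of $Y$ (either $W$'s variable is $Y$, or it is in $\parents{Y}{\mathcal{G}^s}$), so $W_0$'s variable is an ancestor of $Y$, yielding the contradiction.

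The main obstacle is the bidirected-start case, since the second hypothesis only forbids bidirected edges between $Y$ and its SCG-ancestors and leaves room for arbitrary bidirected edges among the ancestors themselves; the proof must carefully rule out ``detours'' through bidirected or common-cause configurations along $\pi$ by showing that every such detour produces at least one collider, so that the only genuinely open continuations are directed paths, which are then ruled out by condition (i) or (ii). Once this d-separation is secured in every compatible FT-ADMG, the single R2 step yields $\probac{Y_t}{\interv{PP(Y_t)=\mathbb{p}}}=\probac{Y_t}{PP(Y_t)=\mathbb{p}}$, and substituting into the first step produces exactly the expression stated in the theorem for the average controlled micro direct effect.
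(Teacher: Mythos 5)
Your proposal is correct and follows essentially the same route as the paper: apply Property~\ref{prop:do_parents} to pass from $\parents{Y_t}{\mathcal{G}}$ to $PP(Y_t)$, then establish $\dsepc{Y_t}{PP(Y_t)}{\emptyset}{\mathcal{G}_{\underline{PP(Y_t)}}}$ in every compatible FT-ADMG by the same case analysis (no incoming edge at $Y_t$ after mutilation; a collider-free outgoing continuation forces a directed path contradicting $\scc{Y}{\mathcal{G}^s}\subseteq\{Y\}$ or temporal order; a bidirected start forces a directed continuation contradicting the no-confounding-with-ancestors condition), and finish with a single application of R2. The only difference is that you traverse the path from $Y_t$ outward while the paper traverses it toward $Y_t$, which is immaterial.
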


For illustration, we give in Figure~\ref{fig:example5} three SCGs where the average controlled micro directed effect is identifiable by Theorem~\ref{thm:Id_CDE_with_hidden_confounders}.

Theorem~\ref{thm:Id_CDE_with_hidden_confounders} offers sufficient conditions for identifying the average controlled micro direct effect supposing a general unknown FT-ADMG. It's important to note, however, that these conditions are not necessary in general. Interestingly, under Assumption~\ref{ass:CausalSufficiency} (\ie, supposing an unknown FT-DAG) and by only considering identifiability by adjustment, these conditions do become necessary, as demonstrated in the following proposition.


\begin{proposition}
    \label{prop:CDE_complete}
    Given an SCG $\mathcal{G}^s = (\mathbb{S}, \mathbb{E}^s)$, $Y\in \mathbb{S}$ and $X_{t-\gamma} \in PP(Y_t)$. Under Assumption~\ref{ass:CausalSufficiency}, if $\scc{Y}{\mathcal{G}^s} \not\subseteq \{Y\}$ then the controlled direct effect of changing $X_{t-\gamma}$ from $x$ to $x'$ on $Y_t$ is not identifiable by adjustment.
\end{proposition}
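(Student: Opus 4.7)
The plan is to prove non-identifiability by exhibiting two compatible FT-DAGs $\mathcal{G}_1$ and $\mathcal{G}_2$ that induce the same observational joint distribution but assign different values to the controlled micro direct effect of $X_{t-\gamma}$ on $Y_t$. Since every adjustment formula is a function of the observational distribution alone, no such formula can return both CDE values simultaneously, which yields non-identifiability by adjustment (and in fact non-identifiability by any observational method).

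Because $\scc{Y}{\mathcal{G}^s}$ is not contained in $\{Y\}$, there exist some $W \neq Y$ together with directed paths $Y \to \cdots \to W$ and $W \to \cdots \to Y$ in $\mathcal{G}^s$. First I would build $\mathcal{G}_1$ by taking all edges along $W \to \cdots \to Y$ to be instantaneous at time $t$, so that an instantaneous directed path runs from $W_t$ to $Y_t$ (when $W \to Y$ is a single SCG edge this puts $W_t \in \parents{Y_t}{\mathcal{G}_1}$ directly); the edges along $Y \to \cdots \to W$ are chosen strictly lagged so that $\mathcal{G}_1$ remains acyclic. Symmetrically, I would build $\mathcal{G}_2$ with $Y \to \cdots \to W$ instantaneous and $W \to \cdots \to Y$ strictly lagged, which makes $W_t$ a proper descendant of $Y_t$ with $W_t \notin \parents{Y_t}{\mathcal{G}_2}$. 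Both graphs respect every declared edge of $\mathcal{G}^s$ and the maximal-lag bound $\gamma_{max}$.

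Next I would parameterize both FT-DAGs by linear Gaussian SCMs with mutually independent exogenous noise. In $\mathcal{G}_1$ the structural equation of $Y_t$ carries coefficient $a_1$ on $X_{t-\gamma}$ and nonzero coefficient $b_1$ on the instantaneous parent of $Y_t$ reached from $W_t$, and the equation of that parent carries a nonzero coefficient $f_1$ on $X_{t-\gamma}$. In $\mathcal{G}_2$ the corresponding instantaneous edge into $Y_t$ is absent, so the equation of $Y_t$ carries some coefficient $a_2$ on $X_{t-\gamma}$ and no such $W$-side term. The classical reverse-direction equivalence for a two-variable Gaussian system, applied to the unique instantaneous edge incident to $Y_t$ along the cycle after conditioning on all strictly lagged parents, determines the remaining coefficients and noise variances of $\mathcal{G}_2$ in terms of those of $\mathcal{G}_1$ so that every cross-covariance and marginal of the full observed vector is preserved; the match forces $a_2 = a_1 + b_1 f_1$.

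Because the CDE in a linear Gaussian model equals the direct structural coefficient of the treatment in the equation of the outcome, it will equal $a_1$ in $\mathcal{G}_1$ and $a_1 + b_1 f_1$ in $\mathcal{G}_2$; choosing $b_1 f_1 \ne 0$ separates the two values and completes the construction. The hard part will be the third step: verifying that a single local edge reversal, together with the associated rebalancing of noise variances, really does preserve every lagged marginal and cross-covariance that involves vertices reached along the lagged portion of the cycle. For a direct $2$-cycle between $Y$ and $W$ this reduces to the textbook reversal computation between $W_t$ and $Y_t$, and the longer-cycle case is handled by applying the same reversal to the unique instantaneous edge touching $Y_t$ while leaving all other strictly lagged structure unchanged.
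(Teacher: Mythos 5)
Your strategy is genuinely different from the paper's: the paper reduces the claim to Theorem~1 of \cite{Ferreira_2024}, arguing that the cycle through $Y$ yields an active non-direct path from a potential parent $V^{n-1}_t$ to $Y_t$ consisting entirely of descendants of $Y$, so that any set blocking it must contain a descendant of $Y_t$ and hence cannot be a valid adjustment set. You instead try to exhibit two compatible FT-DAGs with identical observational distributions but different CDEs, which, if carried out, would prove the stronger claim of non-identifiability by \emph{any} functional of the observational distribution, not merely by adjustment. The logical skeleton of that strategy is sound.

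However, the step you yourself flag as ``the hard part'' is a genuine gap, and as described it does not go through. The two-variable Gaussian edge-reversal argument preserves the joint law of $(W_t,Y_t)$ in isolation, but it does not extend to the full time-series graph: the instantaneous edge between $W_t$ and $Y_t$ is not a covered edge (for instance $X_t$, or the lagged cycle vertices, are parents of one endpoint but not the other), so reversing it changes the set of v-structures, and your $\mathcal{G}_1$ and $\mathcal{G}_2$ in general do not even share a skeleton, since making one arc of the cycle instantaneous and the other lagged in $\mathcal{G}_1$, and the opposite in $\mathcal{G}_2$, produces different edge sets over the unrolled graph. Consequently the two FT-DAGs are not Markov equivalent, any faithful parameterization of one satisfies conditional independences that the other violates, and ``rebalancing the noise variances'' after a local reversal cannot match every cross-covariance involving the other parents and children of $W_t$ and $Y_t$ across all time points; the matching problem is overdetermined relative to the free parameters you introduce. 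A counterexample of this type would have to be built at a carefully chosen unfaithful parameter point and verified globally, which is substantially more work than the textbook reversal and is not obviously achievable for every SCG satisfying the hypothesis. Note also that the proposition only asserts non-identifiability \emph{by adjustment}; your route commits you to the stronger full non-identifiability claim, which is not needed and which the paper deliberately avoids by working with the structure of admissible adjustment sets instead.
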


\section{Identifying Natural Micro Direct Effects in SCGs}
\label{sec:natural}

\begin{figure*}[t!]
	\centering
	\begin{subfigure}{.3\textwidth}
		\centering
		\begin{tikzpicture}[{black, circle, draw, inner sep=0}]
			\tikzset{nodes={draw,rounded corners},minimum height=0.6cm,minimum width=0.6cm}	
			\tikzset{latent/.append style={white, fill=black}}
			
			\node[fill=red!30] (X) at (0.2,0) {$X$} ;
			\node[fill=blue!30] (Y) at (2.2,0) {$Y$};
			\node (W) at (1.2,1) {$W$};

			\begin{scope}[transform canvas={yshift=-.25em}]
				\draw [->,>=latex] (X) -- (W);
			\end{scope}
			\begin{scope}[transform canvas={yshift=.25em}]
				\draw [<-,>=latex] (X) -- (W);
			\end{scope}			

            \draw [->,>=latex] (W) -- (Y);
            \draw [->,>=latex] (X) -- (Y);

			
			\draw[->,>=latex] (X) to [out=165,in=120, looseness=2] (X);
			\draw[->,>=latex] (Y) to [out=15,in=60, looseness=2] (Y);
                \draw[->,>=latex] (W) to [out=165,in=120, looseness=2] (W);

		\end{tikzpicture}
 \end{subfigure}
\hfill 
	\begin{subfigure}{.3\textwidth}
		\begin{tikzpicture}[{black, circle, draw, inner sep=0}]
			\tikzset{nodes={draw,rounded corners},minimum height=0.7cm,minimum width=0.6cm, font=\scriptsize}
			\tikzset{latent/.append style={white, fill=black}}
			
			\node  (Z) at (1.2,1) {$W_t$};
			\node (Z-1) at (0,1) {$W_{t-1}$};
			\node  (Z-2) at (-1.2,1) {$W_{t-2}$};
			\node[fill=blue!30] (Y) at (1.2,0) {$Y_t$};
			\node (Y-1) at (0,0) {$Y_{t-1}$};
			\node  (Y-2) at (-1.2,0) {$Y_{t-2}$};
			\node (X) at (1.2,2) {$X_t$};
			\node[fill=red!30] (X-1) at (0,2) {$X_{t-1}$};
			\node  (X-2) at (-1.2,2) {$X_{t-2}$};
			\draw[->,>=latex] (X-2) to (Z-2);
			\draw[->,>=latex] (X-1) to (Z-1);
			\draw[->,>=latex] (X) to (Z);
			\draw[->,>=latex] (Z-2) -- (Y-2);
			\draw[->,>=latex] (Z-1) -- (Y-1);
			\draw[->,>=latex] (Z) -- (Y);

            \draw[->,>=latex] (X-2) to (Y-1);
			\draw[->,>=latex, thick] (X-1) to (Y);
			\draw[->,>=latex] (X-2) to (Z-1);
			\draw[->,>=latex] (X-1) to (Z);
			\draw[->,>=latex] (Z-2) -- (X-1);
			\draw[->,>=latex] (Z-1) -- (X);
			\draw[->,>=latex] (Z-2) -- (Y-1);
			\draw[->,>=latex] (Z-1) -- (Y);

			\draw[->,>=latex] (X-2) -- (X-1);
			\draw[->,>=latex] (X-1) -- (X);
			\draw[->,>=latex] (Y-2) -- (Y-1);
			\draw[->,>=latex] (Y-1) -- (Y);
			\draw[->,>=latex] (Z-2) -- (Z-1);
			\draw[->,>=latex] (Z-1) -- (Z);

			\draw [dashed,>=latex] (X-2) to[left] (-2,2);
			\draw [dashed,>=latex] (Z-2) to[left] (-2,1);
			\draw [dashed,>=latex] (Y-2) to[left] (-2,0);
			\draw [dashed,>=latex] (X) to[right] (2,2);
			\draw [dashed,>=latex] (Z) to[right] (2,1);
			\draw [dashed,>=latex] (Y) to[right] (2,0);
		\end{tikzpicture}		
  	\end{subfigure}
	\hfill 
	\begin{subfigure}{.3\textwidth}
  \begin{tikzpicture}[{black, circle, draw, inner sep=0}]
			\tikzset{nodes={draw,rounded corners},minimum height=0.7cm,minimum width=0.6cm, font=\scriptsize}
			\tikzset{latent/.append style={white, fill=black}}
			
			\node  (Z) at (1.2,1) {$W_t$};
			\node (Z-1) at (0,1) {$W_{t-1}$};
			\node  (Z-2) at (-1.2,1) {$W_{t-2}$};
			\node[fill=blue!30] (Y) at (1.2,0) {$Y_t$};
			\node (Y-1) at (0,0) {$Y_{t-1}$};
			\node  (Y-2) at (-1.2,0) {$Y_{t-2}$};
			\node (X) at (1.2,2) {$X_t$};
			\node[fill=red!30] (X-1) at (0,2) {$X_{t-1}$};
			\node  (X-2) at (-1.2,2) {$X_{t-2}$};
			\draw[->,>=latex] (Z-2) to (X-2);
			\draw[->,>=latex] (Z-1) to (X-1);
			\draw[->,>=latex] (Z) to (X);
			\draw[->,>=latex] (Z-2) -- (Y-2);
			\draw[->,>=latex] (Z-1) -- (Y-1);
			\draw[->,>=latex] (Z) -- (Y);

                \draw[->,>=latex] (X-2) to (Y-1);
			\draw[->,>=latex, thick] (X-1) to (Y);
			\draw[->,>=latex] (X-2) to (Z-1);
			\draw[->,>=latex] (X-1) to (Z);
			\draw[->,>=latex] (Z-2) -- (X-1);
			\draw[->,>=latex] (Z-1) -- (X);
			\draw[->,>=latex] (Z-2) -- (Y-1);
			\draw[->,>=latex] (Z-1) -- (Y);

			\draw[->,>=latex] (X-2) -- (X-1);
			\draw[->,>=latex] (X-1) -- (X);
			\draw[->,>=latex] (Y-2) -- (Y-1);
			\draw[->,>=latex] (Y-1) -- (Y);
			\draw[->,>=latex] (Z-2) -- (Z-1);
			\draw[->,>=latex] (Z-1) -- (Z);

			\draw [dashed,>=latex] (X-2) to[left] (-2,2);
			\draw [dashed,>=latex] (Z-2) to[left] (-2,1);
			\draw [dashed,>=latex] (Y-2) to[left] (-2,0);
			\draw [dashed,>=latex] (X) to[right] (2,2);
			\draw [dashed,>=latex] (Z) to[right] (2,1);
			\draw [dashed,>=latex] (Y) to[right] (2,0);
		\end{tikzpicture}		
 \end{subfigure}
	\caption{An SCG in (a) with two compatible FT-ADMGs in (b) and (c). Each pair of red and blue vertices represents the micro direct effect we are interested in. The controlled direct effect is identifiable according to our condition but the natural direct effect is not.}
	\label{fig:example3}
\end{figure*}
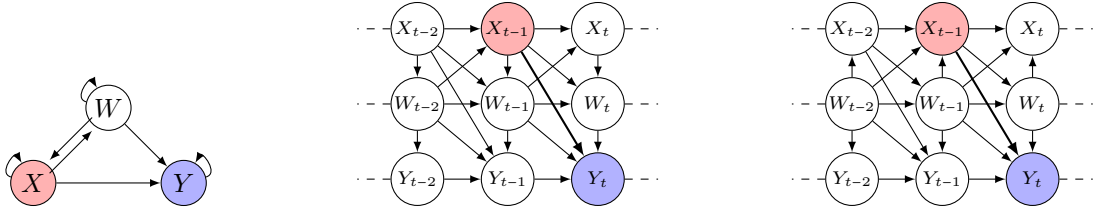
\begin{figure*}[t!]
	\centering
	\begin{subfigure}{.3\textwidth}
		\centering
		\begin{tikzpicture}[{black, circle, draw, inner sep=0}]
			\tikzset{nodes={draw,rounded corners},minimum height=0.6cm,minimum width=0.6cm}	
			\tikzset{latent/.append style={white, fill=black}}
			
			\node[fill=red!30] (X) at (0.2,0) {$X$} ;
			\node[fill=blue!30] (Y) at (2.2,0) {$Y$};
			\node (W) at (1.2,1) {$W$};

				\draw [<-,>=latex] (Y) -- (W);

            \draw [->,>=latex] (X) -- (W);
            \draw [->,>=latex] (X) -- (Y);

			\draw[<->,>=latex, dashed] (X) to [out=90,in=190, looseness=1] (Z);
			
			\draw[->,>=latex] (X) to [out=165,in=120, looseness=2] (X);
			\draw[->,>=latex] (Y) to [out=15,in=60, looseness=2] (Y);
                \draw[->,>=latex] (W) to [out=135,in=90, looseness=2] (W);

			\draw[<->,>=latex, dashed] (X) to [out=-155,in=-110, looseness=2] (X);
   			\draw[<->,>=latex, dashed] (W) to [out=0,in=45, looseness=2] (W);			
		\end{tikzpicture}
 \end{subfigure}
\hfill 
	\begin{subfigure}{.3\textwidth}
		\begin{tikzpicture}[{black, circle, draw, inner sep=0}]
			\tikzset{nodes={draw,rounded corners},minimum height=0.7cm,minimum width=0.6cm, font=\scriptsize}
			\tikzset{latent/.append style={white, fill=black}}
			
			\node  (Z) at (1.2,1) {$W_t$};
			\node (Z-1) at (0,1) {$W_{t-1}$};
			\node  (Z-2) at (-1.2,1) {$W_{t-2}$};
			\node[fill=blue!30] (Y) at (1.2,0) {$Y_t$};
			\node (Y-1) at (0,0) {$Y_{t-1}$};
			\node  (Y-2) at (-1.2,0) {$Y_{t-2}$};
			\node (X) at (1.2,2) {$X_t$};
			\node[fill=red!30] (X-1) at (0,2) {$X_{t-1}$};
			\node  (X-2) at (-1.2,2) {$X_{t-2}$};
			\draw[->,>=latex] (X-2) to (Z-2);
			\draw[->,>=latex] (X-1) to (Z-1);
			\draw[->,>=latex] (X) to (Z);
			\draw[->,>=latex] (Z-2) -- (Y-2);
			\draw[->,>=latex] (Z-1) -- (Y-1);
			\draw[->,>=latex] (Z) -- (Y);

                \draw[->,>=latex] (X-2) to (Y-1);
			\draw[->,>=latex, thick] (X-1) to (Y);
			\draw[->,>=latex] (X-2) to (Z-1);
			\draw[->,>=latex] (X-1) to (Z);
			\draw[->,>=latex] (Z-2) -- (Y-1);
			\draw[->,>=latex] (Z-1) -- (Y);

			\draw[->,>=latex] (X-2) -- (X-1);
			\draw[->,>=latex] (X-1) -- (X);
			\draw[->,>=latex] (Y-2) -- (Y-1);
			\draw[->,>=latex] (Y-1) -- (Y);
			\draw[->,>=latex] (Z-2) -- (Z-1);
			\draw[->,>=latex] (Z-1) -- (Z);
			
			\draw[<->,>=latex, dashed] (X-2) to [out=190,in=170, looseness=1] (Z-2);
			\draw[<->,>=latex, dashed] (X-1) to [out=-10,in=10, looseness=1] (Z-1);
			\draw[<->,>=latex, dashed] (X) to [out=-10,in=10, looseness=1] (Z);
			\draw[<->,>=latex, dashed] (X-2) to [out=-70,in=160, looseness=1] (Z-1);
			\draw[<->,>=latex, dashed] (X-1) to [out=-70,in=160, looseness=1] (Z);
			\draw[<->,>=latex, dashed] (X-2) to [out=80,in=100, looseness=1] (X-1);
			\draw[<->,>=latex, dashed] (X-1) to [out=80,in=100, looseness=1] (X);
			\draw[<->,>=latex, dashed] (Z-2) to [out=80,in=100, looseness=1] (Z-1);
			\draw[<->,>=latex, dashed] (Z-1) to [out=80,in=100, looseness=1] (Z);

			\draw [dashed,>=latex] (X-2) to[left] (-2,2);
			\draw [dashed,>=latex] (Z-2) to[left] (-2,1);
			\draw [dashed,>=latex] (Y-2) to[left] (-2,0);
			\draw [dashed,>=latex] (X) to[right] (2,2);
			\draw [dashed,>=latex] (Z) to[right] (2,1);
			\draw [dashed,>=latex] (Y) to[right] (2,0);
		\end{tikzpicture}		
  	\end{subfigure}
	\caption{An SCG in (a) with a compatible FT-ADMG in (b). Each pair of red and blue vertices represents the micro direct effect we are interested in. The controlled direct effect is identifiable according to our condition but the natural direct effect is not.}
	\label{fig:example4}
\end{figure*}
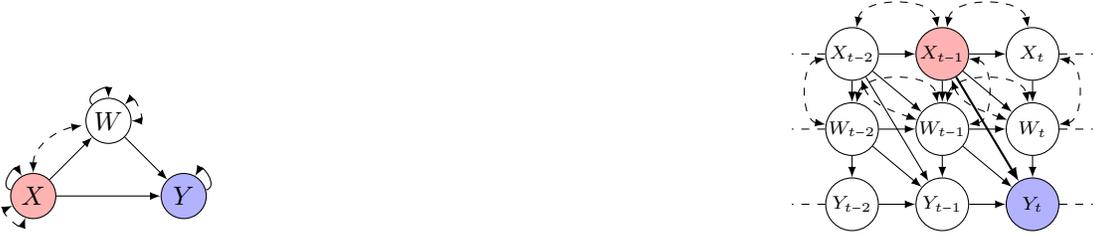



\cite{Pearl_2014} provided sufficient conditions to reframe the problem of identifying the average natural direct effect from one involving counterfactuals to one based on interventions, specifically using the $do()$ operator. The formal result addressed cases where $\mathbb{Z}$ represents all mediators. It was informally observed that the same conditions apply when $\mathbb{Z}$ includes all parent variables. Since this paper focuses on parent variables, we restate and formalize this result in the context of parents.

\begin{lemma}
    \label{Lemma:Id_NDE}
    Given an SCG $\mathcal{G}^s=(\mathbb{S}, \mathbb{E}^s)$, let $Y \in \mathbb{S}$, $X_{t-\gamma} \in PP(Y_t)$. If $\scc{Y}{\mathcal{G}^s} \subseteq \{Y\}$ and there is no bidirected dashed arrow between $Y$ and its ancestors then  the natural direct effect of changing $X_{t-\gamma}$ from $x$ to $x'$ on $Y_t$, $NDE(X_{t-\gamma}^{x,x'},Y_t)$, is identifiable from the SCG if
    \begin{itemize}
        \item $\probac{Y_t = y}{\interv{X_{t-\gamma} = x}, \interv{\mathbb{Z} = \mathbb{z}}}$ is identifiable from the SCG, and
        \item $\probac{\mathbb{Z} = \mathbb{z}}{\interv{X_{t-\gamma} = x}}$ is identifiable from the SCG.
    \end{itemize}
    where $\mathbb{Z} = PP(Y_t) \backslash \{X_{t-\gamma}\}$.
\end{lemma}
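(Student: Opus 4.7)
The plan is to follow the classical mediation-formula argument of \citet{Pearl_2014} but carried out uniformly across every FT-ADMG compatible with the SCG. First I would use Property~\ref{prop:do_parents} together with the assumption $\mathbb{Z}=PP(Y_t)\backslash\{X_{t-\gamma}\}$ to replace $\parents{Y_t}{\mathcal{G}}\backslash\{X_{t-\gamma}\}$ by $\mathbb{Z}$ in Definition~\ref{def:NDE} for every compatible $\mathcal{G}$, so that in particular $\mathbb{Z}\cup\{X_{t-\gamma}\}\supseteq\parents{Y_t}{\mathcal{G}}$. The structural assumptions ($\scc{Y}{\mathcal{G}^s}\subseteq\{Y\}$ and no bidirected dashed arrow between $Y$ and its ancestors) then guarantee, in every compatible $\mathcal{G}$, that (i) no element of $\mathbb{Z}$ is a descendant of $Y_t$ and (ii) there is no hidden common cause of $Y_t$ and any variable in $\mathbb{Z}$. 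These are exactly the graphical prerequisites used in Pearl's derivation.

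Next I would establish the mediation identity
\begin{equation*}
E\bigl(Y_t\mid \interv{X_{t-\gamma}=x'},\interv{\mathbb{Z}=\mathbb{z}_x}\bigr)=\sum_{\mathbb{z}}E\bigl(Y_t\mid \interv{X_{t-\gamma}=x'},\interv{\mathbb{Z}=\mathbb{z}}\bigr)\,\probac{\mathbb{Z}=\mathbb{z}}{\interv{X_{t-\gamma}=x}}.
\end{equation*}
The argument is the standard counterfactual decomposition: by consistency, $E(Y_t\mid \interv{X=x'},\interv{\mathbb{Z}=\mathbb{z}_x})=\sum_{\mathbb{z}}E(Y_{t,x',\mathbb{z}})\,\proba{\mathbb{Z}_x=\mathbb{z}}$ provided $Y_{t,x',\mathbb{z}}\perp\!\!\!\perp \mathbb{Z}_x$, and the latter independence follows in each compatible FT-ADMG from the absence of hidden confounding between $Y_t$ and $\mathbb{Z}\subseteq PP(Y_t)$. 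The terms $E(Y_{t,x',\mathbb{z}})$ and $\proba{\mathbb{Z}_x=\mathbb{z}}$ are then rewritten as the interventional quantities $E(Y_t\mid \interv{X=x'},\interv{\mathbb{Z}=\mathbb{z}})$ and $\probac{\mathbb{Z}=\mathbb{z}}{\interv{X=x}}$, which by hypothesis are identifiable from the SCG.

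Finally I would handle the second term of the NDE. Using the law of total probability and the fact that $\mathbb{Z}\cup\{X_{t-\gamma}\}\supseteq\parents{Y_t}{\mathcal{G}}$, one rewrites
\begin{equation*}
E\bigl(Y_t\mid \interv{X_{t-\gamma}=x}\bigr)=\sum_{\mathbb{z}}E\bigl(Y_t\mid \interv{X_{t-\gamma}=x},\interv{\mathbb{Z}=\mathbb{z}}\bigr)\,\probac{\mathbb{Z}=\mathbb{z}}{\interv{X_{t-\gamma}=x}},
\end{equation*}
again invoking Property~\ref{prop:do_parents} to promote the conditioning on $\mathbb{Z}$ to a $do$-intervention. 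Subtracting yields a formula for $NDE(X_{t-\gamma}^{x,x'},Y_t)$ that involves only the two interventional distributions assumed identifiable, finishing the proof. The main obstacle is justifying the counterfactual independence $Y_{t,x',\mathbb{z}}\perp\!\!\!\perp \mathbb{Z}_x$ uniformly in every compatible FT-ADMG; this is where the structural assumptions on $\mathcal{G}^s$ are essential, since any bidirected edge between $Y$ and an ancestor, or any non-trivial strongly connected component containing $Y$, would create in some compatible graph a confounded or descendant-type path between $Y_t$ and $\mathbb{Z}$ that invalidates the mediation formula.
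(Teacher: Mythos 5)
Your proposal is correct and follows essentially the same route as the paper: both reproduce Pearl's (2014) mediation decomposition, writing the nested counterfactual term as $\sum_{\mathbb{z}}E(Y_t\mid \interv{X_{t-\gamma}=x'},\interv{\mathbb{Z}=\mathbb{z}})\probac{\mathbb{Z}=\mathbb{z}}{\interv{X_{t-\gamma}=x}}$, justifying the removal of the cross-world conditioning via the absence of confounding between $Y_t$ and $PP(Y_t)$ (the paper phrases this as the d-separation $\dsep{Y_t}{PP(Y_t)}{\mathcal{G}_{\underline{PP(Y_t)}}}$ established in the CDE theorem), handling $E(Y_t\mid \interv{X_{t-\gamma}=x})$ via the law of composition, and reducing everything to the two interventional distributions assumed identifiable. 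The only cosmetic difference is that you invoke the counterfactual independence $Y_{t,x',\mathbb{z}}\perp\!\!\!\perp\mathbb{Z}_x$ directly where the paper cites the corresponding graphical separation, which amounts to the same justification.
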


Notice that Lemma~\ref{Lemma:Id_NDE} shows that the the identifiability of the  average natural micro direct effect can be reduced to the identifiability of the average controlled micro direct effect and the identifiability of the interventional distribution $\probac{\mathbb{Z} = \mathbb{z}}{\interv{X_{t-\gamma} = x}}$. The first identifiability can be directly given by the result of the previous section which means that we need to impose the constraints given for the average controlled micro direct effect to the average natural micro direct effect. In addition, we need to add some conditions to ensure the second identifiability. 
The SCG in Figure~\ref{fig:example3}(a) represents a case where this second identifiability cannot be achieved. To see this, consider $\probac{W_{t-1} = w, W_t = w', Y_{t-1} = y}{\interv{X_{t-1} = x}}$ and notice that, similarly to Figure~\ref{fig:example1}, because of the cycle $X\rightleftarrows W$ in the SCG, the $\interv{\cdot}$ operator cannot be removed using the same sequence of do-calculus rules in the first compatible FT-ADMG in Figure~\ref{fig:example3}(b) in which $X_{t-1} \rightarrow W_{t-1}$ and in the second compatible FT-ADMG in Figure~\ref{fig:example3}(c) in which $X_{t-1} \leftarrow W_{t-1}$.
The SCG in Figure~\ref{fig:example4}(a) represents another case where this second identifiability cannot be achieved due to hidden confounding. To see this, consider $\probac{W_{t-1} = w, W_t = w', Y_{t-1} = y}{\interv{X_{t-1} = x}}$ and notice that, similarly to Figure~\ref{fig:example2}, because of  hidden confounding $X\longdashleftrightarrow W$ in the SCG, the $\interv{\cdot}$ operator cannot be removed in this interventional distribution in the compatible FT-ADMG in Figure~\ref{fig:example4}(b) in which $X_{t-1} \longdashleftrightarrow W_{t-1}$.

\begin{theorem}
    \label{thm:Id_NDE_with_hidden_confounders}
    Given an SCG $\mathcal{G}^s$, the natural direct effect of changing $X_{t-\gamma}$ from $x$ to $x'$ on $Y_t$, $NDE(X_{t-\gamma}^{x,x'},Y_t)$, is identifiable from the SCG if
    \begin{multicols}{2}
    \begin{enumerate}
        \item \label{NDE_w_item:1} $\scc{Y}{\mathcal{G}^s} \subseteq \{Y\}$, 
        \item \label{NDE_w_item:2} $\scc{X}{\mathcal{G}^s} \subseteq \{X\}$, 
        \item \label{NDE_w_item:3} $PP(X_{t-\gamma}) \cap PP(Y_t) = \emptyset$, 
        \item \label{NDE_w_item:4} $\nexists Z \in \ancestors{Y}{\mathcal{G}^s}$, $Z \longdashleftrightarrow Y$, and
        \item \label{NDE_w_item:5} $\nexists Z\in \ancestors{Y}{\mathcal{G}^s}$, $Z \longdashleftrightarrow X$.
    \end{enumerate}
    \end{multicols}
    When these conditions are satisfied, we have:
    $$NDE(X_{t-\gamma}^{x,x'},Y_t) = \sum\limits_{\mathbb{z}}\left(CDE(X_{t-\gamma}^{x,x'},Y_t,\mathbb{z})\sum\limits_{\mathbb{a}}\probac{\mathbb{Z}=\mathbb{z}}{X_{t-\gamma}=x,\mathbb{A}=\mathbb{a}}\proba{\mathbb{A}=\mathbb{a}}\right)$$
    where $\mathbb{Z} = PP(Y_t)\backslash\{X_{t-\gamma}\}$, $\mathbb{A} = PP(X_{t-\gamma})$ and a formula for $CDE(X_{t-\gamma}^{x,x'},Y_t,\mathbb{z})$ is given in Theorem~\ref{thm:Id_CDE_with_hidden_confounders}.
\end{theorem}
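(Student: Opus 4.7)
The strategy is to invoke Lemma~\ref{Lemma:Id_NDE} and verify both of its identifiability bullets under conditions~\ref{NDE_w_item:1}--\ref{NDE_w_item:5}. Conditions~\ref{NDE_w_item:1} and~\ref{NDE_w_item:4} coincide with the hypotheses of Theorem~\ref{thm:Id_CDE_with_hidden_confounders}, which already identifies $\probac{Y_t}{\interv{X_{t-\gamma}=x},\interv{\mathbb{Z}=\mathbb{z}}}=\probac{Y_t}{X_{t-\gamma}=x,\mathbb{Z}=\mathbb{z}}$ and hence supplies the inner $CDE(X_{t-\gamma}^{x,x'},Y_t,\mathbb{z})$ factor in the target formula. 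The bulk of the proof is then devoted to showing that the second interventional quantity, $\probac{\mathbb{Z}=\mathbb{z}}{\interv{X_{t-\gamma}=x}}$, equals the backdoor expression $\sum_{\mathbb{a}}\probac{\mathbb{Z}=\mathbb{z}}{X_{t-\gamma}=x,\mathbb{A}=\mathbb{a}}\proba{\mathbb{A}=\mathbb{a}}$ with $\mathbb{A}=PP(X_{t-\gamma})$; substituting back into $NDE=\sum_{\mathbb{z}}CDE(X_{t-\gamma}^{x,x'},Y_t,\mathbb{z})\probac{\mathbb{Z}=\mathbb{z}}{\interv{X_{t-\gamma}=x}}$ from Lemma~\ref{Lemma:Id_NDE} then recovers the theorem's expression.

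The backdoor identity would be obtained by applying R3 and then R2 of the do-calculus to the factorisation $\probac{\mathbb{Z}=\mathbb{z}}{\interv{X_{t-\gamma}=x}}=\sum_{\mathbb{a}}\probac{\mathbb{Z}=\mathbb{z}}{\mathbb{A}=\mathbb{a},\interv{X_{t-\gamma}=x}}\probac{\mathbb{A}=\mathbb{a}}{\interv{X_{t-\gamma}=x}}$. The R3 step needs $\dsepc{\mathbb{A}}{X_{t-\gamma}}{\emptyset}{\mathcal{G}_{\overline{X_{t-\gamma}}}}$, which follows from condition~\ref{NDE_w_item:2}: $\scc{X}{\mathcal{G}^s}\subseteq\{X\}$ together with the temporal ordering of any compatible FT-ADMG forces no element of $\mathbb{A}$ to be a descendant of $X_{t-\gamma}$, and with an empty conditioning set every active path out of $X_{t-\gamma}$ in $\mathcal{G}_{\overline{X_{t-\gamma}}}$ is directed and therefore reaches only descendants. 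Condition~\ref{NDE_w_item:3} additionally guarantees $\mathbb{A}\cap\mathbb{Z}=\emptyset$, so the backdoor sum is well-defined.

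The main technical step is the R2 application, which requires $\dsepc{\mathbb{Z}}{X_{t-\gamma}}{\mathbb{A}}{\mathcal{G}_{\underline{X_{t-\gamma}}}}$ in every compatible FT-ADMG $\mathcal{G}$. Following the path-enumeration style of the proof of Theorem~\ref{thm:Id_CDE_with_hidden_confounders}, I would fix a path $\pi$ from $X_{t-\gamma}$ to some $Z_{t_Z}\in\mathbb{Z}$ in $\mathcal{G}_{\underline{X_{t-\gamma}}}$ and split on the type of its first edge, which must be $\leftarrow$ or $\longdashleftrightarrow$ since outgoing tails at $X_{t-\gamma}$ are cut. In the $X_{t-\gamma}\leftarrow A_1$ case, $A_1\in\mathbb{A}$ is a non-collider on $\pi$ and blocks it. In the $X_{t-\gamma}\longdashleftrightarrow A_1$ case, condition~\ref{NDE_w_item:5} forces the cluster of $A_1$ to be a non-ancestor of $Y$ in $\mathcal{G}^s$; projecting to the SCG then shows that $A_1$ is not an ancestor in $\mathcal{G}$ of any vertex whose cluster is a parent of $Y$, and in particular not of $Z_{t_Z}$, which rules out a fully-forward continuation and forces $\pi$ to contain a collider $V$ (possibly at $A_1$ itself). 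To block $\pi$ one finally argues that neither $V$ nor any of its descendants lies in $\mathbb{A}$, which is where conditions~\ref{NDE_w_item:2} and~\ref{NDE_w_item:3} get used together with the SCG-to-FT-ADMG projection.

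This collider-activation analysis in the bidirected subcase is the main obstacle of the proof: one must exclude intricate backdoor paths from $X_{t-\gamma}$ to $\mathbb{Z}$ whose collider could be re-opened by some parent of $X_{t-\gamma}$, and condition~\ref{NDE_w_item:3} (preventing any vertex from being simultaneously a possible parent of $X_{t-\gamma}$ and of $Y_t$) together with condition~\ref{NDE_w_item:2} (forbidding cycles that would let a descendant of $V$ loop back to feed $X_{t-\gamma}$) are exactly the structural obstructions needed to rule this out. Once the backdoor identity is established for $\probac{\mathbb{Z}=\mathbb{z}}{\interv{X_{t-\gamma}=x}}$, the stated formula for $NDE$ follows by plugging it into the Lemma~\ref{Lemma:Id_NDE} expression.
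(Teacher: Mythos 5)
Your overall route is the paper's: reduce via Lemma~\ref{Lemma:Id_NDE}, obtain the first interventional term from Theorem~\ref{thm:Id_CDE_with_hidden_confounders} under Conditions~\ref{NDE_w_item:1} and~\ref{NDE_w_item:4}, and identify $\probac{\mathbb{Z}=\mathbb{z}}{\interv{X_{t-\gamma}=x}}$ by adjusting on $\mathbb{A}=PP(X_{t-\gamma})$, with one application of R3 (giving $\probac{\mathbb{A}=\mathbb{a}}{\interv{X_{t-\gamma}=x}}=\proba{\mathbb{A}=\mathbb{a}}$) and one of R2 (requiring $\dsepc{X_{t-\gamma}}{\mathbb{Z}}{\mathbb{A}}{\mathcal{G}_{\underline{X_{t-\gamma}}}}$ in every compatible FT-ADMG). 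The first three branches of your path analysis match the paper's, and your justification of the R3 step from Condition~\ref{NDE_w_item:2} alone is fine.

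The gap is in the last and hardest subcase, where $\pi$ starts with $X_{t-\gamma}\longdashleftrightarrow A_1$. You correctly deduce from Condition~\ref{NDE_w_item:5} that the cluster $U$ of $A_1$ satisfies $U\notin\ancestors{Y}{\mathcal{G}^s}$, hence that a fully directed continuation to $Z_{t_Z}$ is impossible and $\pi$ has a first collider $V\in\descendants{A_1}{\mathcal{G}}$. But you then assert that $\descendants{V}{\mathcal{G}}\cap\mathbb{A}=\emptyset$ follows from Conditions~\ref{NDE_w_item:2} and~\ref{NDE_w_item:3}. It does not: Condition~\ref{NDE_w_item:3} only forbids a vertex from lying in both $PP(X_{t-\gamma})$ and $PP(Y_t)$ (its role is to kill the length-two path $X_{t-\gamma}\leftarrow Z_{t_Z}$), and Condition~\ref{NDE_w_item:2} only forbids cycles through $X$; neither prevents some $A\in\mathbb{A}$ from being a descendant of $V$, since $V$ need not be a descendant of $X$ and $A$ need not be a possible parent of $Y_t$. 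The step is closed, once more, by Condition~\ref{NDE_w_item:5}: every cluster represented in $\mathbb{A}$ lies in $\parents{X}{\mathcal{G}^s}\cup\{X\}\subseteq\ancestors{Y}{\mathcal{G}^s}$, so if some $A\in\mathbb{A}$ were a descendant of $V$, hence of $A_1$, then $U$ would be an ancestor of $Y$ in $\mathcal{G}^s$ while satisfying $U\longdashleftrightarrow X$, a contradiction. This is exactly the paper's argument ($\parents{X}{\mathcal{G}^s}\cap\descendants{U}{\mathcal{G}^s}=\emptyset$, hence $\mathbb{A}\cap\descendants{V}{\mathcal{G}}=\emptyset$). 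The fact you derived, that $A_1$ is not an ancestor of any vertex whose cluster is a \emph{parent of $Y$}, is too narrow to apply to $\mathbb{A}$, whose clusters are parents of $X$; you need the ancestor-of-$Y$ version. As written, the justification of the decisive step would not go through.
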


Note that Condition~\ref{NDE_w_item:3} implies either having no self-loop on $X$ in the SCG or choosing $\gamma=\gamma_{max}$.
As an illustration, Figure~\ref{fig:example5}(b) presents an SCG where $NDE(X_{t-\gamma_{max}}^{x,x'}, Y_t)$ is identifiable, though this identifiability does not hold when $\gamma_{max}$ is replaced by $\gamma < \gamma_{max}$. In contrast, Figure~\ref{fig:example5}(c) shows an SCG where $NDE(X_{t-\gamma}^{x,x'}, Y_t)$ is identifiable for all $\gamma$.

\begin{figure*}[t!]
	\centering
	\begin{subfigure}{.3\textwidth}
		\centering
		\begin{tikzpicture}[{black, circle, draw, inner sep=0}]
			\tikzset{nodes={draw,rounded corners},minimum height=0.6cm,minimum width=0.6cm}	
			\tikzset{latent/.append style={white, fill=black}}
			
			\node[fill=red!30] (X) at (0.2,0) {$X$} ;
			\node[fill=blue!30] (Y) at (2.2,0) {$Y$};
			\node (W) at (0.2,1) {$W$};
                \node (Z) at (2.2,1) {$Z$};

			\begin{scope}[transform canvas={xshift=-.25em}]
				\draw [->,>=latex] (X) -- (W);
			\end{scope}
			\begin{scope}[transform canvas={xshift=.25em}]
				\draw [<-,>=latex] (X) -- (W);
			\end{scope}			
			\begin{scope}[transform canvas={xshift=-.15em, yshift=.15em}]
				\draw [->,>=latex] (X) -- (Z);
			\end{scope}
			\begin{scope}[transform canvas={xshift=.15em, yshift=-.15em}]
				\draw [<-,>=latex] (X) -- (Z);
			\end{scope}			
   
            \draw [->,>=latex] (W) -- (Y);
            \draw [->,>=latex] (Z) -- (Y);
            \draw [->,>=latex] (X) -- (Y);

			\draw[<->,>=latex, dashed] (X) to [out=-15,in=250, looseness=1] (Z);
                \draw[<->,>=latex, dashed] (X) to [out=180,in=180, looseness=1] (W);

			\draw[->,>=latex] (X) to [out=165,in=120, looseness=2] (X);
			\draw[->,>=latex] (Y) to [out=15,in=60, looseness=2] (Y);
			\draw[->,>=latex] (Z) to [out=15,in=60, looseness=2] (Z);
                \draw[->,>=latex] (W) to [out=165,in=120, looseness=2] (W);

			\draw[<->,>=latex, dashed] (X) to [out=-155,in=-110, looseness=2] (X);
			\draw[<->,>=latex, dashed] (Z) to [out=-25,in=-70, looseness=2] (Z);
		\end{tikzpicture}
 \end{subfigure}
   \hfill 
	\begin{subfigure}{.3\textwidth}
		\centering
		\begin{tikzpicture}[{black, circle, draw, inner sep=0}]
			\tikzset{nodes={draw,rounded corners},minimum height=0.6cm,minimum width=0.6cm}	
			\tikzset{latent/.append style={white, fill=black}}
			
			\node[fill=red!30] (X) at (0.2,0) {$X$} ;
			\node[fill=blue!30] (Y) at (2.2,0) {$Y$};
			\node (W) at (0.2,1) {$W$};
                \node (Z) at (2.2,1) {$Z$};

                \draw [->,>=latex] (X) -- (W);

			\begin{scope}[transform canvas={yshift=-.25em}]
				\draw [->,>=latex] (W) -- (Z);
			\end{scope}
			\begin{scope}[transform canvas={yshift=.25em}]
				\draw [<-,>=latex] (W) -- (Z);
			\end{scope}			
			\draw[<->,>=latex, dashed] (W) to [out=90,in=90, looseness=0.6] (Z);

            \draw [->,>=latex] (W) -- (Y);
            \draw [->,>=latex] (Z) -- (Y);
            \draw [->,>=latex] (X) -- (Y);

			\draw[->,>=latex] (X) to [out=165,in=120, looseness=2] (X);
			\draw[->,>=latex] (Y) to [out=15,in=60, looseness=2] (Y);
			\draw[->,>=latex] (Z) to [out=15,in=60, looseness=2] (Z);
                \draw[->,>=latex] (W) to [out=165,in=120, looseness=2] (W);

		\end{tikzpicture}		
  	\end{subfigure}
\hfill 
   	\begin{subfigure}{.3\textwidth}
		\centering
		\begin{tikzpicture}[{black, circle, draw, inner sep=0}]
			\tikzset{nodes={draw,rounded corners},minimum height=0.6cm,minimum width=0.6cm}	
			\tikzset{latent/.append style={white, fill=black}}
			
			\node[fill=red!30] (X) at (0.2,0) {$X$} ;
			\node[fill=blue!30] (Y) at (2.2,0) {$Y$};
			\node (W) at (0.2,1) {$W$};
                \node (Z) at (2.2,1) {$Z$};

        \draw [->,>=latex] (X) -- (W);

            \draw [->,>=latex] (W) -- (Y);
            \draw [->,>=latex] (Z) -- (Y);
            \draw [->,>=latex] (X) -- (Y);

			\begin{scope}[transform canvas={yshift=-.25em}]
				\draw [->,>=latex] (W) -- (Z);
			\end{scope}
			\begin{scope}[transform canvas={yshift=.25em}]
				\draw [<-,>=latex] (W) -- (Z);
			\end{scope}			

			\draw[<->,>=latex, dashed] (W) to [out=90,in=90, looseness=0.6] (Z);
			
			\draw[->,>=latex] (Y) to [out=15,in=60, looseness=2] (Y);
			\draw[->,>=latex] (Z) to [out=15,in=60, looseness=2] (Z);
                \draw[->,>=latex] (W) to [out=165,in=120, looseness=2] (W);

		\end{tikzpicture}
 \end{subfigure}

	\caption{Three SCGs. Each pair of red and blue vertices represents the micro direct effect we are interested in. The controlled direct effect is identifiable in all these SCGs using the conditions given in this paper. The natural direct effect is not identifiable  in (a), identifiable if $\gamma = \gamma_{max}$ in (b) and identifiable for all $\gamma$ in (c).}
	\label{fig:example5}
\end{figure*}
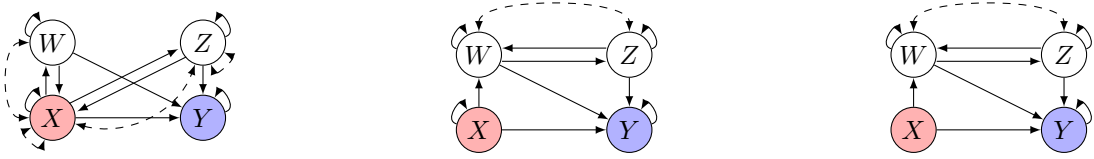

\section{Real world example}

\begin{figure}[t!]
	\centering
		\begin{tikzpicture}[{black, rectangle, draw, inner sep=0}]
			\tikzset{nodes={draw,rounded corners},minimum height=0.6cm,minimum width=0.6cm}	
			\tikzset{latent/.append style={white, fill=black}}
			
			\node[fill=red!30] (X) at (0,0) {$X:$ Emissions regulations} ;
			\node[fill=blue!30] (Y) at (10,0) {$Y:$ Respiratory health};
			\node (A) at (5,1) {$A:$ Air pollution level};
			\node (E) at (5,-1) {$E:$ Economic activity level};

			\begin{scope}[transform canvas={yshift=-.1em, xshift=.1em}]
				\draw [->,>=latex] (X) -- (A);
			\end{scope}
			\begin{scope}[transform canvas={yshift=.2em, xshift=-.2em}]
				\draw [<-,>=latex] (X) -- (A);
			\end{scope}			
			\begin{scope}[transform canvas={yshift=.1em, xshift=-.1em}]
				\draw [->,>=latex] (X) -- (E);
			\end{scope}
			\begin{scope}[transform canvas={yshift=-.2em, xshift=-.4em}]
				\draw [<-,>=latex] (X) -- (E);
			\end{scope}			

            \draw [->,>=latex] (X) -- (Y);
            \draw [->,>=latex] (A) -- (Y);
            \draw [->,>=latex] (E) -- (Y);
            \draw[->,>=latex] (E) -- (A);
			
			\draw[->,>=latex] (X) to [out=165,in=120, looseness=2] (X);
			\draw[->,>=latex] (Y) to [out=15,in=60, looseness=2] (Y);
            \draw[->,>=latex] (E) to [out=165,in=120, looseness=2] (E);
            \draw[->,>=latex] (A) to [out=165,in=120, looseness=2] (A);
		\end{tikzpicture}
        \caption{Example in epidemiology}
        \label{fig:real_example}
\end{figure}

In the following, we give one example in epidemiology (Figure~\ref{fig:real_example}) which illustrates a scenario where our theorem can be applicable.

Suppose we want to assess the effect of introducing stricter emissions regulations ($X$) in a city at time $t-1$ on the respiratory health of residents ($Y$) at time $t$. There are at least two confounders between the treatment and outcome:
\begin{itemize}
    \item Economic activity level ($E$): it affects both pollution levels (which, in turn, impact respiratory health) and the likelihood of implementing emissions regulations. The economic activity level can have a lagged causal relationship with the treatment, or potentially an instantaneous effect, depending on the sampling frequency.
    \item Air pollution level ($A$): it affects respiratory health outcomes and the likelihood of emissions regulations. Similarly to the first confounder, the relation between air pollution and  $X$ or $Y$ can be instantaneous, depending on the sampling frequency.
\end{itemize}
At the same time, these two confounders can act as mediators between $X$ and $Y$ as regulations can effect the economy and the air pollution level.
At the SCG level, if we disregard the time dimension, we obtain a cyclic relationship between each of the two confounders and the treatment. Therefore, these confounders complicate the estimation of the causal effect of emissions regulations on respiratory health.

Assuming that there are no other confounders, the micro CDE is identifiable using Theorem~\ref{thm:Id_CDE_with_hidden_confounders}, however the micro NDE is not identifiable using Theorem~\ref{thm:Id_NDE_with_hidden_confounders}.



\section{Conclusion}
\label{sec:conclusion}

In this paper, we addressed both average controlled and average natural micro direct effects, providing theoretical results that extend existing work in causal inference using summary causal graphs.  By accounting for cycles, hidden confounders and allowing for a non-parametric setting, our results contribute to a broader applicability of causal inference techniques in real-world settings where full causal specification is impractical. Moreover, it is important to note that contradictory to most results regarding identification in summary causal graphs, this work does not require any stationarity assumption. Our findings are particularly relevant in fields such as epidemiology, where accurate measurement of direct effects is crucial for informing public health interventions and policy decisions.

In future work, it will be important to validate whether our conditions hold when focusing on other forms of average controlled and average natural direct effects, such as those that emphasize mediators rather than parents. Additionally, it would be valuable to establish necessary and sufficient conditions for average controlled direct effects that extend beyond simple adjustment. Another  future direction would be to derive necessary and sufficient conditions for identifying average controlled direct effects in the presence of hidden confounding, as well as for average natural direct effects. However, we believe that this is more challenging to achieve compared to average controlled direct effects without hidden confounding. Now that total and direct effects are well understood in the context of summary causal graphs, it would also be interesting to explore the identification of indirect effects and, more broadly, path-specific effects within summary causal graphs. Finally, we plan to apply the findings of this work to real-world applications, particularly in the field of epidemiology.

\acks{This work was supported by the CIPHOD project (ANR-23-CPJ1-0212-01).}

\bibliography{references}

\begin{thebibliography}{29}
\providecommand{\natexlab}[1]{#1}
\providecommand{\url}[1]{\texttt{#1}}
\expandafter\ifx\csname urlstyle\endcsname\relax
  \providecommand{\doi}[1]{doi: #1}\else
  \providecommand{\doi}{doi: \begingroup \urlstyle{rm}\Url}\fi

\bibitem[Anand et~al.(2023)Anand, Ribeiro, Tian, and Bareinboim]{Anand_2023}
Tara~V. Anand, Adele~H. Ribeiro, Jin Tian, and Elias Bareinboim.
\newblock Causal effect identification in cluster dags.
\newblock \emph{Proceedings of the AAAI Conference on Artificial Intelligence},
  37\penalty0 (10):\penalty0 12172--12179, Jun. 2023.

\bibitem[Assaad(2024)]{Assaad_2024_frontdoor}
Charles~K Assaad.
\newblock Towards identifiability of micro total effects in summary causal
  graphs with latent confounding: extension of the front-door criterion.
\newblock \emph{Transactions on Machine Learning Research}, 2024.

\bibitem[Assaad et~al.(2023)Assaad, Ez-Zejjari, and Zan]{Assaad_2023}
Charles~K. Assaad, Imad Ez-Zejjari, and Lei Zan.
\newblock Root cause identification for collective anomalies in time series
  given an acyclic summary causal graph with loops.
\newblock In Francisco Ruiz, Jennifer Dy, and Jan-Willem van~de Meent, editors,
  \emph{Proceedings of The 26th International Conference on Artificial
  Intelligence and Statistics}, volume 206 of \emph{Proceedings of Machine
  Learning Research}, pages 8395--8404. PMLR, 25--27 Apr 2023.

\bibitem[Assaad et~al.(2024)Assaad, Devijver, Gaussier, Goessler, and
  Meynaoui]{Assaad_2024}
Charles~K. Assaad, Emilie Devijver, Eric Gaussier, Gregor Goessler, and Anouar
  Meynaoui.
\newblock Identifiability of total effects from abstractions of time series
  causal graphs.
\newblock In Negar Kiyavash and Joris~M. Mooij, editors, \emph{Proceedings of
  the Fortieth Conference on Uncertainty in Artificial Intelligence}, volume
  244 of \emph{Proceedings of Machine Learning Research}, pages 173--185. PMLR,
  15--19 Jul 2024.

\bibitem[Aït-Bachir et~al.(2023)Aït-Bachir, Assaad, de~Bignicourt, Devijver,
  Ferreira, Gaussier, Mohanna, and Zan]{aitbachir_2023}
Ali Aït-Bachir, Charles~K. Assaad, Christophe de~Bignicourt, Emilie Devijver,
  Simon Ferreira, Eric Gaussier, Hosein Mohanna, and Lei Zan.
\newblock Case studies of causal discovery from it monitoring time series,
  2023.
\newblock The History and Development of Search Methods for Causal Structure
  Workshop at the 39th Conference on Uncertainty in Artificial Intelligence.

\bibitem[Eichler and Didelez(2007)]{Eichler_2007}
Michael Eichler and Vanessa Didelez.
\newblock Causal reasoning in graphical time series models.
\newblock In \emph{Proceedings of the Twenty-Third Conference on Uncertainty in
  Artificial Intelligence}, UAI'07, page 109–116, Arlington, Virginia, USA,
  2007. AUAI Press.
\newblock ISBN 0974903930.

\bibitem[Ferreira and Assaad(2024{\natexlab{a}})]{Ferreira_2024}
Simon Ferreira and Charles~K. Assaad.
\newblock Identifiability of direct effects from summary causal graphs.
\newblock \emph{Proceedings of the AAAI Conference on Artificial Intelligence},
  38\penalty0 (18):\penalty0 20387--20394, Mar. 2024{\natexlab{a}}.
\newblock \doi{10.1609/aaai.v38i18.30021}.

\bibitem[Ferreira and Assaad(2024{\natexlab{b}})]{Ferreira_2024_workshop}
Simon Ferreira and Charles~K. Assaad.
\newblock Identifying macro conditional independencies and macro total effects
  in summary causal graphs with latent confounding, 2024{\natexlab{b}}.
\newblock Causal Inference for Time Series Data Workshop at the 40th Conference
  on Uncertainty in Artificial Intelligence.

\bibitem[Flanagan(2020)]{Flanagan_2020}
Emily~R. Flanagan.
\newblock Identification and estimation of direct causal effects.
\newblock Thesis, University of Washington, Juin 2020.

\bibitem[Huang and Valtorta(2006)]{Huang_2006}
Yimin Huang and Marco Valtorta.
\newblock Pearl's calculus of intervention is complete.
\newblock In \emph{Proceedings of the Twenty-Second Conference on Uncertainty
  in Artificial Intelligence}, UAI'06, page 217–224, Arlington, Virginia,
  USA, 2006. AUAI Press.
\newblock ISBN 0974903922.

\bibitem[Kaufman et~al.(2004)Kaufman, Maclehose, and Kaufman]{Kaufman_2004}
Jay Kaufman, Richard Maclehose, and Sol Kaufman.
\newblock A further crtique of the analytic strategy of adjusting for
  covariates to identify biologic mediation.
\newblock \emph{Epidemiologic perspectives \& innovations : EP+I}, 1:\penalty0
  4, 02 2004.
\newblock \doi{10.1186/1742-5573-1-4}.

\bibitem[Maathuis and Colombo(2013)]{Maathuis_2013}
Marloes Maathuis and Diego Colombo.
\newblock A generalized backdoor criterion.
\newblock \emph{The Annals of Statistics}, 43, 07 2013.
\newblock \doi{10.1214/14-AOS1295}.

\bibitem[Pearl(1995)]{Pearl_1995}
Judea Pearl.
\newblock Causal diagrams for empirical research.
\newblock \emph{Biometrika}, 82\penalty0 (4):\penalty0 669--688, 1995.

\bibitem[Pearl(2001)]{Pearl_2001}
Judea Pearl.
\newblock Direct and indirect effects.
\newblock In \emph{Proceedings of the Seventeenth Conference on Uncertainy in
  Articial Intelligence}, 2001.

\bibitem[Pearl(2009)]{Pearl_2000}
Judea Pearl.
\newblock \emph{Causality: Models, Reasoning and Inference}.
\newblock Cambridge University Press, USA, 2nd edition, 2009.
\newblock ISBN 052189560X.

\bibitem[Pearl(2014)]{Pearl_2014}
Judea Pearl.
\newblock Interpretation and identification of causal mediation.
\newblock \emph{Psychological methods}, 19, 06 2014.
\newblock \doi{10.1037/a0036434}.

\bibitem[Perkovic(2020)]{Perkovic_2020}
Emilija Perkovic.
\newblock Identifying causal effects in maximally oriented partially directed
  acyclic graphs.
\newblock In Jonas Peters and David Sontag, editors, \emph{Proceedings of the
  36th Conference on Uncertainty in Artificial Intelligence (UAI)}, volume 124
  of \emph{Proceedings of Machine Learning Research}, pages 530--539. PMLR,
  03--06 Aug 2020.

\bibitem[Perkovic et~al.(2016)Perkovic, Textor, Kalisch, and
  Maathuis]{Perkovic_2016}
Emilija Perkovic, Johannes Textor, Markus Kalisch, and Marloes~H. Maathuis.
\newblock Complete graphical characterization and construction of adjustment
  sets in markov equivalence classes of ancestral graphs.
\newblock \emph{J. Mach. Learn. Res.}, 18:\penalty0 220:1--220:62, 2016.

\bibitem[Peters et~al.(2013)Peters, Janzing, and Sch\"{o}lkopf]{Peters_2013}
Jonas Peters, Dominik Janzing, and Bernhard Sch\"{o}lkopf.
\newblock Causal inference on time series using restricted structural equation
  models.
\newblock In C.J. Burges, L.~Bottou, M.~Welling, Z.~Ghahramani, and K.Q.
  Weinberger, editors, \emph{Advances in Neural Information Processing
  Systems}, volume~26. Curran Associates, Inc., 2013.
\newblock URL
  \url{https://proceedings.neurips.cc/paper_files/paper/2013/file/47d1e990583c9c67424d369f3414728e-Paper.pdf}.

\bibitem[Richardson(2003)]{Richardson_2003}
Thomas~S. Richardson.
\newblock Markov properties for acyclic directed mixed graphs.
\newblock \emph{Scandinavian Journal of Statistics}, 30:\penalty0 145--157,
  2003.

\bibitem[Robins and Greenland(1992)]{Robins_1992}
James~M. Robins and Sander Greenland.
\newblock Identifiability and exchangeability for direct and indirect effects.
\newblock \emph{Epidemiology}, 3:\penalty0 143--155, 1992.

\bibitem[Shpitser and Pearl(2006)]{Shpitser_2006}
Ilya Shpitser and Judea Pearl.
\newblock Identification of joint interventional distributions in recursive
  semi-markovian causal models.
\newblock In \emph{AAAI}, pages 1219--1226, 2006.

\bibitem[Spirtes et~al.(2000)Spirtes, Glymour, and Scheines]{Spirtes_2000}
Peter Spirtes, Clark Glymour, and Richard Scheines.
\newblock \emph{Causation, Prediction, and Search}.
\newblock MIT press, 2nd edition, 2000.

\bibitem[Vanderweele(2011)]{Vanderweele_2011}
Tyler~J. Vanderweele.
\newblock Controlled direct and mediated effects: Definition, identification
  and bounds.
\newblock \emph{Scandinavian Journal of Statistics}, 38\penalty0 (3):\penalty0
  551--563, 2011.
\newblock \doi{https://doi.org/10.1111/j.1467-9469.2010.00722.x}.

\bibitem[Wahl et~al.(2024)Wahl, Ninad, and Runge]{Wahl_2024}
Jonas Wahl, Urmi Ninad, and Jakob Runge.
\newblock Foundations of causal discovery on groups of variables.
\newblock \emph{Journal of Causal Inference}, 12\penalty0 (1):\penalty0
  20230041, 2024.
\newblock \doi{doi:10.1515/jci-2023-0041}.

\bibitem[Wang et~al.(2023)Wang, Qin, and Zhou]{Wang_2023}
Tian-Zuo Wang, Tian Qin, and Zhi-Hua Zhou.
\newblock Estimating possible causal effects with latent variables via
  adjustment.
\newblock In Andreas Krause, Emma Brunskill, Kyunghyun Cho, Barbara Engelhardt,
  Sivan Sabato, and Jonathan Scarlett, editors, \emph{Proceedings of the 40th
  International Conference on Machine Learning}, volume 202 of
  \emph{Proceedings of Machine Learning Research}, pages 36308--36335. PMLR,
  23--29 Jul 2023.

\bibitem[Wright(1920)]{Wright_1920}
Sewall Wright.
\newblock The relative importance of heredity and environment in determining
  the piebald pattern of guinea-pigs.
\newblock \emph{Proceedings of the National Academy of Sciences of the United
  States of America}, 6\penalty0 (6):\penalty0 320--332, 1920.
\newblock ISSN 00278424.

\bibitem[Wright(1921)]{Wright_1921}
Sewall Wright.
\newblock Correlation and causation.
\newblock \emph{Journal of agricultural research}, 20\penalty0 (7):\penalty0
  557--585, 1921.

\bibitem[Zhou et~al.(2021)Zhou, Liu, Hung, Haycock, Aldrich, Andrew, Arnold,
  Bickeböller, Bojesen, Brennan, Brunnström, Melander, Caporaso, Landi, Chen,
  Goodman, Christiani, Cox, Field, Johansson, Kiemeney, Lam, Lazarus,
  Le~Marchand, Rennert, Risch, Schabath, Shete, Tardón, Zienolddiny, Shen, and
  Amos]{Zhou_2021}
Wen Zhou, Geoffrey Liu, Rayjean~J. Hung, Philip~C. Haycock, Melinda~C. Aldrich,
  Angeline~S. Andrew, Susanne~M. Arnold, Heike Bickeböller, Stig~E. Bojesen,
  Paul Brennan, Hans Brunnström, Olle Melander, Neil~E. Caporaso, Maria~Teresa
  Landi, Chu Chen, Gary~E. Goodman, David~C. Christiani, Angela Cox, John~K.
  Field, Mikael Johansson, Lambertus~A. Kiemeney, Stephen Lam, Philip Lazarus,
  Loïc Le~Marchand, Gad Rennert, Angela Risch, Matthew~B. Schabath, Sanjay~S.
  Shete, Adonina Tardón, Shanbeh Zienolddiny, Hongbing Shen, and
  Christopher~I. Amos.
\newblock Causal relationships between body mass index, smoking and lung
  cancer: Univariable and multivariable mendelian randomization.
\newblock \emph{International Journal of Cancer}, 148\penalty0 (5):\penalty0
  1077--1086, 2021.
\newblock \doi{https://doi.org/10.1002/ijc.33292}.

\end{thebibliography}

\appendix

\section{Proofs}

\begin{proof}[Property~\ref{prop:do_parents}]
    Take $Z_{t_Z} \in \mathbb{Z} \backslash \parents{Y_t}{\mathcal{G}}$ and notice that $\dsepc{Y_t}{Z_{t_Z}}{\parents{Y_t}{\mathcal{G}}}{\mathcal{G}_{\overline{\mathbb{Z}}}}$. Indeed, $\parents{Y_t}{\mathcal{G}} \subseteq \mathbb{Z}$ so in $\mathcal{G}_{\overline{\mathbb{Z}}}$, $\parents{Y_t}{\mathcal{G}}$ have no parents and thus cannot be colliders nor descendants of colliders. Therefore, in $\mathcal{G}_{\overline{\mathbb{Z}}}$, an active path starting with $Y_t$ must be of the form $\langle Y_t \rightarrow \cdots \rightarrow \rangle$ or $\langle Y_t \longdashleftrightarrow \cdot \rightarrow \cdots \rightarrow \rangle$ and an active path ending with $Z_{t_Z}$ must be of the form $\langle \leftarrow \cdots \leftarrow Z_{t_Z} \rangle$. Clearly, there is no active path from $Y_t$ to $Z_{t_Z}$ in $\mathcal{G}_{\overline{\mathbb{Z}}}$ and the d-separation holds. Thus, using Rule 3 of the do-calculus \citep{Pearl_1995} we get $\probac{Y_t = y}{\interv{\parents{Y_t}{\mathcal{G}} = \mathbb{p}}} = \probac{Y_t = y}{\interv{\mathbb{Z} = \mathbb{z}}}$.
\end{proof}

\begin{proof}[Theorem~\ref{thm:Id_CDE_with_hidden_confounders}]
    Using Property~\ref{prop:do_parents} one can replace $\mathbb{Z} = \parents{Y_t}{\mathcal{G}}\backslash\{X_{t-\gamma}\}$ by $\mathbb{Z} = PP(Y_t)\backslash\{X_{t-\gamma}\}$ in Definition~\ref{def:CDE}.
    Suppose that $\scc{Y}{\mathcal{G}^s} \subseteq \{Y\}$ and that there does not exist a bidirected dashed arrow between $Y$ and one of its ancestors in $\mathcal{G}^s$.
    Let us show that in every compatible FT-ADMG $\mathcal{G}$, we have $\indep{Y_t}{PP(Y_t)}{\mathcal{G}_{\underline{PP(Y_t)}}}$ in order to use R2 of the do-calculus to write $\probac{Y_t = y}{\interv{PP(Y_t) = \mathbb{p}}}$ as 
 $\probac{Y_t = y}{PP(Y_t) = \mathbb{p}}$.
    
    Let $\mathcal{G}$ be a compatible FT-ADMG and $Z_{t_Z} \in PP(Y_t)$ and let us consider a path $\pi$ from $Z_{t_Z}$ to $Y_t$ in $\mathcal{G}_{\underline{PP(Y_t)}}$.
    \begin{itemize}
        \item $\pi$ cannot end with a right arrow (\ie, $\pi \neq \langle Z_{t_Z} \cdots \rightarrow Y_t \rangle$) as no arrow going in $Y_t$ exists in $\mathcal{G}_{\underline{PP(Y_t)}}$.
        \item $\pi$ can end with a left arrow (\ie, $\pi = \langle Z_{t_Z} \cdots \leftarrow Y_t \rangle$) but it cannot contain only left arrows (\ie, $\pi \neq \langle Z_{t_Z} \leftarrow \cdots \leftarrow Y_t \rangle$) as if $X=Y$, this would imply having causal arrows going backwards in time which is forbidden or if $X\neq Y$, this would imply $X\in\descendants{Y}{\mathcal{G}^s}$ and thus $X \in \scc{Y}{\mathcal{G}^s}$ which contradicts the assumption. Therefore, there exists a collider in $\pi$ ($\pi = \langle Z_{t_Z} \cdots \rightarrow W_{t_W} \leftarrow \cdots \leftarrow Y_t \rangle$ or $\pi = \langle Z_{t_Z} \cdots \longdashleftrightarrow W_{t_W} \leftarrow \cdots \leftarrow Y_t \rangle$) and $\pi$ is blocked.
        \item If $\pi$ ends with a bidirected dashed arrow ($\pi = \langle Z_{t_Z} \cdots U_{t_U} \longdashleftrightarrow Y_t \rangle$) then by assumption,$U_{t_U}$ is not a possible ancestor of $Y_t$ so since $Z_{t_Z} \in PP(Y_t)$, the remaining arrows cannot all be left arrows ($\pi \neq \langle Z_{t_Z} \leftarrow \cdots \leftarrow U_{t_U} \longdashleftrightarrow Y_t \rangle$). Therefore, there exists a collider in $\pi$ (\ie, $\pi = \langle Z_{t_Z} \cdots \rightarrow W_{t_W} \leftarrow \cdots \leftarrow U_{t_U} \longdashleftrightarrow Y_t \rangle$ or $\pi = \langle Z_{t_Z} \cdots \longdashleftrightarrow W_{t_W} \leftarrow \cdots \leftarrow U_{t_U} \longdashleftrightarrow Y_t \rangle$) and $\pi$ is blocked.
    \end{itemize}
    Therefore, all path between $Y_t$ and $PP(Y_t)$ are blocked in $\mathcal{G}_{\underline{PP(Y_t)}}$, \ie,  $\indep{Y_t}{PP(Y_t)}{\mathcal{G}_{\underline{PP(Y_t)}}}$.
\end{proof}

\begin{proof}[Proposition~\ref{prop:CDE_complete}]
    Firstly using Property~\ref{prop:do_parents} one can replace $\mathbb{Z} = \parents{Y_t}{\mathcal{G}}$ by $\mathbb{Z} = PP(Y_t)$ in Definition~\ref{def:CDE}.
    Let us suppose that $\scc{Y}{\mathcal{G}^s} \not\subseteq \{Y\}$, then there exists a cycle on $Y$ other than the self-loop in the SCG, \ie $\exists C \in Cycle(Y,\mathcal{G}^s) \backslash \{\langle Y\rangle\}$.
    Let us write $C = \langle V^1,\cdots,V^n\rangle$ with $n\geq3$, $V^1= V^n = Y$, $\forall 1\leq i < n, V^i\neq V^{i+1}$ and $V^i\rightarrow V^{i+1}$ or $V^i\rightleftarrows V^{i+1}$.
    Then, $\langle V^{n-1}, \cdots, V^1\rangle$ is a active non-direct path containing only descendants of $Y$ so according to Theorem 1 of \cite{Ferreira_2024} one cannot be certain to block every non-direct from $V^{n-1}_t$ (which is a potential parent of $Y_t$) to $Y_t$ without risking to induce a bias by adjusting on descendants of $Y_t$ and thus the controlled direct effect is not identifiable by adjustment.
\end{proof}

\begin{proof}{Lemma~\ref{Lemma:Id_NDE}}
    As in \cite{Pearl_2014}, the first term $E(Y_t\mid \interv{X_{t-\gamma} = x'}, \interv{\mathbb{Z} = \mathbb{z}_x})$ in $NDE(X_{t-\gamma}^{x,x'},Y_t)$ can be written as $ \sum_\mathbb{z}E(Y_t\mid \interv{X_{t-\gamma} = x'}, \interv{\mathbb{Z} = \mathbb{z}}, \mathbb{Z}_x=\mathbb{z})\probac{\mathbb{Z}=\mathbb{z}}{\interv{X_{t-\gamma}=x}}.$
    Using $\dsep{Y_t}{PP(Y_t)}{\mathcal{G}_{\underline{PP(Y_t)}}}$ which was shown in the proof of Theorem~\ref{thm:Id_CDE_with_hidden_confounders} one can obtain $E(Y_t\mid \interv{X_{t-\gamma} = x'}, \interv{\mathbb{Z} = \mathbb{z}_x}) = \sum_\mathbb{z}E(Y_t\mid \interv{X_{t-\gamma} = x'}, \interv{\mathbb{Z} = \mathbb{z}})\probac{\mathbb{Z}=\mathbb{z}}{\interv{X_{t-\gamma}=x}}.$
    The second term $E(Y_t\mid \interv{X_{t-\gamma} = x})$ in $NDE(X_{t-\gamma}^{x,x'},Y_t)$ can be modified similarly since using the law of composition~\citep[Chapter 7, Property 1]{Pearl_2000}
    $E(Y_t\mid \interv{X_{t-\gamma} = x}) = E(Y_t\mid \interv{X_{t-\gamma} = x}, \interv{\mathbb{Z} = \mathbb{z}_x}).$
    Therefore, if $\probac{Y_t = y}{\interv{X_{t-\gamma} = x},\interv{\mathbb{Z} = \mathbb{z}}}$ and $\probac{\mathbb{Z} = \mathbb{z}}{\interv{X_{t-\gamma} = x}}$ are identifiable from the SCG then $NDE(X_{t-\gamma}^{x,x'},Y_t)$ is identifiable.
\end{proof}

\begin{proof}[Theorem\ref{thm:Id_NDE_with_hidden_confounders}]
    Conditions~\ref{NDE_w_item:1} and~\ref{NDE_w_item:4} allow to use lemma~\ref{Lemma:Id_NDE}.
    Therefore, it is sufficient to identify $\probac{Y_t = y}{\interv{X_{t-\gamma} = x},\interv{\mathbb{Z} = \mathbb{z}}}$ and $\probac{\mathbb{Z} = \mathbb{z}}{\interv{X_{t-\gamma} = x}}$.
    Theorem~\ref{thm:Id_CDE_with_hidden_confounders} states that $\probac{Y_t = y}{\interv{X_{t-\gamma} = x},\interv{\mathbb{Z} = \mathbb{z}}}$ is identifiable given Conditions~\ref{NDE_w_item:1} and~\ref{NDE_w_item:4}, thus what is left to show is how $\probac{\mathbb{Z} = \mathbb{z}}{\interv{X_{t-\gamma} = x}}$ is identifiable.
    By the law of total probability, $\probac{\mathbb{Z} = \mathbb{z}}{\interv{X_{t-\gamma} = x}}=\sum_{\mathbb{a}}\probac{\mathbb{Z} = \mathbb{z}}{\interv{X_{t-\gamma} = x},\mathbb = \mathbb{a}}\probac{\mathbb{A} = \mathbb{a}}{\interv{X_{t-\gamma} = x}}$.
    Notice that because of Conditions~\ref{NDE_w_item:2} and~\ref{NDE_w_item:5} we have for all compatible FT-ADMG $\mathcal{G}$, $\dsep{\mathbb{A}}{X_{t-\gamma}}{\mathcal{G}_{\overline{X_{t-\gamma}}}}$ and thus using R3 of the do-calculus $\probac{\mathbb{A} = \mathbb{a}}{\interv{X_{t-\gamma} = x}}=\proba{\mathbb{A} = \mathbb{a}}$.
    Let us show that in every compatible FT-ADMG $\mathcal{G}$, $\dsepc{X_{t-\gamma}}{\mathbb{Z}}{\mathbb{A}}{\mathcal{G}_{\underline{X_{t-\gamma}}}}$ which will allow to use R2 of the do-calculus and get $\probac{\mathbb{Z} = \mathbb{z}}{\interv{X_{t-\gamma} = x},\mathbb{A} = \mathbb{a}} = \probac{\mathbb{Z} = \mathbb{z}}{X_{t-\gamma} = x,\mathbb{A} = \mathbb{a}}$.
    Let $\mathcal{G}$ be a compatible FT-ADMG and $Z_{t_Z}\in\mathbb{Z}$ and suppose there exists a path $\pi = \langle X_{t-\gamma} \cdots Z_{t_Z} \rangle$ in $\mathcal{G}_{\underline{X_{t-\gamma}}}$.
    \begin{itemize}
        \item $\pi$ cannot start with a right arrow (\ie, $\pi\neq\langle X_{t-\gamma} \rightarrow \cdots Z_{t_Z} \rangle$) as no arrow going out of $X_{t-\gamma}$ exists in $\mathcal{G}_{\underline{X_{t-\gamma}}}$.
        \item $\pi$ cannot start with a left arrow and be of length $2$ (\ie, $\pi \neq \langle X_{t-\gamma} \leftarrow Z_{t_Z} \rangle$) as this contradicts Condition~\ref{NDE_w_item:3}.
        \item $\pi$ can start with a left arrow if it has length at least $2$ (\ie, $\pi = \langle X_{t-\gamma} \leftarrow U_{t_U} \cdots Z_{t_Z} \rangle$) as $U_{t_U} \in \mathbb{A}$ and thus $\mathbb{A}$ blocks $\pi$.
        \item If $\pi$ starts with a bidirected dashed arrow (\ie, $\pi = \langle X_{t-\gamma} \longdashleftrightarrow \cdots Z_{t_Z} \rangle$) then length of $\pi$ cannot be $2$ (\ie, $\pi \neq \langle X_{t-\gamma} \longdashleftrightarrow Z_{t_Z} \rangle$) and the remaining arrow cannot all be right arrows ($\pi \neq \langle X_{t-\gamma} \longdashleftrightarrow U_{t_U} \rightarrow \cdots \rightarrow Z_{t_Z} \rangle$) as this would imply $U \in \ancestors{Z}{\mathcal{G}^s} \subseteq \ancestors{Y}{\mathcal{G}^s}$ which contradicts Condition~\ref{NDE_w_item:5}.
        Therefore, there exists a collider in $\pi$ (\ie, $\pi = \langle X_{t-\gamma} \longdashleftrightarrow U_{t_U} \rightarrow \cdots \rightarrow W_{t_W} \leftarrow \cdots Z_{t_Z} \rangle$ or $\langle X_{t-\gamma} \longdashleftrightarrow U_{t_U} \rightarrow \cdots \rightarrow W_{t_W} \longdashleftrightarrow \cdots Z_{t_Z} \rangle$) which verifies $\descendants{U}{\mathcal{G}^s} \supseteq \descendants{W}{\mathcal{G}^s}$ and Condition~\ref{NDE_w_item:5} implies $\parents{X}{\mathcal{G}^s} \cap \descendants{U}{\mathcal{G}^s} = \emptyset$ so $\parents{X}{\mathcal{G}^s} \cap \descendants{W}{\mathcal{G}^s} = \emptyset$. Thus $\mathbb{A} \cap \descendants{W_{t_W}}{\mathcal{G}} = \emptyset$ and $\mathbb{A}$ blocks $\pi$.
    \end{itemize}
    All those cases are exhaustive and thus under Conditions~\ref{NDE_w_item:2},~\ref{NDE_w_item:3} and~\ref{NDE_w_item:5}, $\mathbb{A}$ is a valid adjustment set to identify $\probac{\mathbb{Z} = \mathbb{z}}{\interv{X_{t-\gamma} = x}}$.
    Therefore, the 5 conditions together imply the identifiability of $NDE(X_{t-\gamma}^{x,x'},Y_t)$.
\end{proof}

\end{document}